%% file: draft.tex
\documentclass[letterpaper]{article} 
\usepackage{aaai2026}  
\usepackage{times}  
\usepackage{helvet}  
\usepackage{courier}  
\usepackage[hyphens]{url}  
\usepackage{graphicx} 

\usepackage{natbib}  
\usepackage{caption} 
\usepackage{algorithm}
\usepackage{algorithmicx}

\usepackage{newfloat}
\usepackage{listings}
\DeclareCaptionStyle{ruled}{labelfont=normalfont,labelsep=colon,strut=off} 
\floatstyle{ruled}
\newfloat{listing}{tb}{lst}{}
\floatname{listing}{Listing}
\usepackage{amsfonts}
\usepackage{amsmath}

\usepackage{algpseudocode}
\usepackage{multirow}
\usepackage{enumitem}
\usepackage{booktabs}
\usepackage[table,xcdraw]{xcolor}
\usepackage{adjustbox}
\usepackage{siunitx}
\usepackage{diagbox}
\usepackage{mathtools}
\usepackage{amssymb,amsthm}
\usepackage{tabularx}
\usepackage{placeins}
\usepackage[perpage]{footmisc}

\title{Harmonic Dataset Distillation for Time Series Forecasting}
\author{
    Seungha Hong\textsuperscript{\rm 1}, Sanghwan Jang\textsuperscript{\rm 1}, Wonbin Kweon\textsuperscript{\rm 2}, Suyeon Kim\textsuperscript{\rm 1}, Gyuseok Lee\textsuperscript{\rm 2}, Hwanjo Yu\textsuperscript{\rm 1}\footnote{Corresponding Author}
}
\affiliations{
\textsuperscript{\rm 1}Pohang University of Science and Technology (POSTECH), Republic of Korea\\
\textsuperscript{\rm 2}University of Illinois Urban-Champaign (UIUC), United States\\
\{shhong97, s.jang, kimsu, hwanjoyu\}@postech.ac.kr, \{wonbin, gyuseok2\}@illinois.edu

}

\usepackage{bibentry}

\usepackage{selectp}

\begin{document}

\maketitle

\begin{abstract}

Time Series forecasting (TSF) in the modern era faces significant computational and storage cost challenges due to the massive scale of real-world data. Dataset Distillation (DD), a paradigm that synthesizes a small, compact dataset to achieve training performance comparable to that of the original dataset, has emerged as a promising solution. However, conventional DD methods are not tailored for time series and suffer from architectural overfitting and limited scalability. To address these issues, we propose \underline{H}armonic Dataset \underline{D}istillation for \underline{T}ime Series Forecasting (\textbf{HDT}). HDT decomposes the time series into its sinusoidal basis through the FFT and aligns the core periodic structure by \emph{Harmonic Matching}. Since this process operates in the frequency domain, all updates during distillation are applied globally without disrupting temporal dependencies of time series. Extensive experiments demonstrate that HDT achieves strong cross-architecture generalization and scalability, validating its practicality for large-scale, real-world applications.

\end{abstract}

\section{Introduction}

Time Series Forecasting (TSF), a task that aims to predict future values based on historical data, is a critical task across numerous domains, including industrial manufacturing, healthcare, traffic, and meteorology \cite{survey}. However, the practical application of TSF faces significant challenges related to data storage and computational cost. On the data side, sources such as industrial sensors and biomedical monitors collect data at high frequencies (often every minute or second), producing terabytes of sequential data daily and making it impractical to store the complete historical record \cite{cisco, noaa, vitaldb}. On the model side, the recent advent of large foundation models such as TimesFM \cite{timesfm} and Moirai \cite{moirai} has intensified these computational burdens, amplifying the need for efficient data handling.

\begin{figure}[t!]
  \centering
  \includegraphics[width=\linewidth]{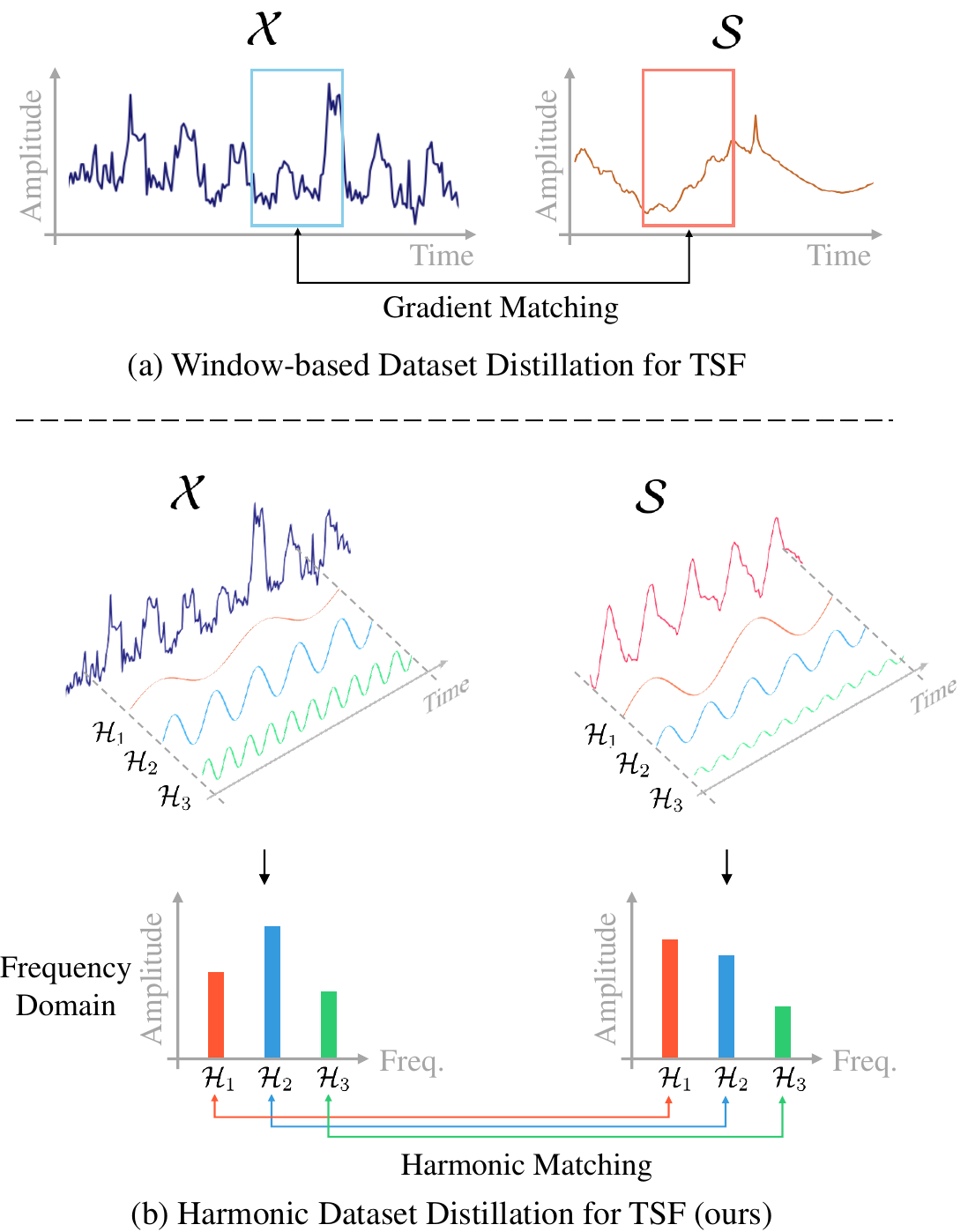}
  \caption{Illustrative examples of distilling an original dataset $\mathcal{X}$ into a synthetic dataset $\mathcal{S}$ with (a) Window-based and (b) Harmonic Dataset Distillation for TSF. In (a), windows randomly sampled from $\mathcal{X}$ are distilled into arbitrary positions within $\mathcal{S}$. In (b), the sequence is decomposed into a sinusoidal basis, and selected harmonics ($\mathcal{H}_i$) between $\mathcal{X}$ and $\mathcal{S}$ are aligned during distillation.
 }
\label{vs}
\end{figure}

A promising approach to address these challenges is \emph{Dataset Distillation} (DD), a paradigm that synthesizes a small, compact dataset whose training performance is comparable to that of the entire original dataset.
Initially formulated as a bi-level optimization problem \cite{wang2018datasetdistillation}, DD has demonstrated its effectiveness in image classification \cite{zhao2021datasetcondensation, nguyen2021kip, cazenavette2022dataset, zhao2023distribution}.
Recently, there have been attempts to extend it to other data modalities such as graphs \cite{jin2022graph, liu2024gcsr}, time series \cite{liu2024dataset, ding2024condtsf}, and natural language \cite{maekawa2023text, maekawa2024dilm}.

However, directly applying conventional DD methods to TSF fails to account for the global structure of time series data. In TSF, models use small, fixed-size windows of data as input and output, even when the overall time series is long. For example, a model might use 96 past steps to predict the next 96 steps while the total time series spans 20,000 to 70,000 data points. If we adopt existing DD methods directly to time series, each time window is treated as an independent data instance. Therefore, optimization is performed within these local windows, trying to match local windows from the synthetic data to those from the original data (Figure~\ref{vs}-(a)). This ``local-to-local" approach, which we refer to as \emph{Window-based Dataset Distillation for TSF}, completely disregards the characteristics of time series, such as long-range dependencies and periodicity. This failure to capture the global context leads to limited scalability with respect to the synthetic data size and degraded cross-architecture generalization.

To tackle these limitations, we propose \emph{\underline{H}armonic Dataset \underline{D}istillation for \underline{T}ime Series Forecasting} (\textbf{HDT}). Instead of performing distillation on localized time domain window pairs, HDT shifts the optimization space to the frequency domain using the Fast Fourier Transform (FFT). Specifically, we apply the FFT to represent both the original and synthetic time series as a sum of sinusoidal basis functions. Among these, we define the dominant components as \emph{harmonics}, which contain the core periodic information of the sequence.
Our \emph{Harmonic Matching} aligns the distributions of these harmonics between the original and synthetic data, thereby preserving the global structure of time series (Figure~\ref{vs}-(b)). Furthermore, since each basis has a global influence over the entire sequence, any updates in this optimization space result in a modification of the synthetic sequence as a whole. This ensures updates are applied without disrupting the temporal dependencies within the synthetic series, even with a conventional distillation loss.

To validate our proposed method, we provide a comprehensive evaluation from both theoretical and empirical standpoints. We first conduct a theoretical analysis to formally justify how \emph{Harmonic Matching} preserves essential temporal structures. Complementing this, our extensive experiments on various TSF datasets confirm that HDT achieves state-of-the-art performance across diverse backbone architectures and synthetic dataset sizes.

In summary, our work makes the following contributions:
\begin{itemize}
    \item We introduce HDT, a novel and effective dataset distillation method for time series forecasting.
    \item A theoretical analysis is provided, proving that the synthetic dataset distilled by HDT preserves the essential global structure of the original data.
    \item Extensive experiments on modern backbones demonstrate the method's state-of-the-art performance and strong cross-architecture generalization.
\end{itemize}

\section{Preliminaries}
We first provide the problem formulation and key notations of Time Series Forecasting (TSF) and Dataset Distillation (DD). Then, we examine the scenario of directly applying conventional DD methods to TSF and discuss their limitations.

\subsection{Time Series Forecasting}
A time series, denoted as $\mathcal{X}\in \mathbb{R}^{N \times C}$, is a sequence of data points measured at successive time intervals.
Here, $N$ is the total length of the time series, and $C$ is the number of variables (or channels) measured at each time step. 
In TSF, given a historical lookback window $\mathcal{X}_{s+1:s+l}$ of length $l$ starting from an arbitrary time step $s$, the objective is to predict a future forecast horizon $\mathcal{X}_{s+l+1:s+l+t}$ of length $t$. 

For model training and evaluation, the entire series $\mathcal{X}$ is divided chronologically into training set $\mathcal{X}_\text{train}$ and test set $\mathcal{X}_\text{test}$. From each of these sets, a sliding window is applied to generate input-output pairs $(x, y)$, where $x = \mathcal{X}_{s+1:s+l}$ and $y = \mathcal{X}_{s+l+1:s+l+t}$ for all possible start times $s$.
With the prepared training samples, the model is trained by minimizing the loss function $\mathcal{L}$:
\begin{equation}
\begin{aligned}
    \mathcal{L}(\theta, \mathcal{X})&:= \frac{1}{|\mathcal{X}|}\sum_{(x, y) \in \mathcal{X}} d(f_\theta(x), y).
\end{aligned}
\end{equation}
Here, the loss $\mathcal{L}$ is the average of a distance metric $d$ (e.g., L2 distance) between the model's predictions $f_\theta(x)$ and the true values $y$ over all pairs in the dataset.

The training process finds the optimal parameters $\theta^*$ by solving the following optimization problem on the training data:
\begin{equation}
\begin{aligned}
    \theta^* & = \underset{\theta}{\text{argmin}}~ \mathcal{L}(\theta, \mathcal{X}_\text{train}).
\end{aligned}
\end{equation}
For convenience, we encapsulate this entire training process as a function $\mathcal{T}$. This function takes the initial parameter $\theta$ and the training dataset $\mathcal{X}_\text{train}$ as inputs, and returns the trained parameters:
\begin{equation}
\begin{aligned}
\mathcal{T}(\theta, \mathcal{X}_\text{train})\approx
 \underset{\theta}{\text{argmin}}~ \mathcal{L}(\theta, \mathcal{X}_\text{train}).
\end{aligned}
\end{equation}

\subsection{Dataset Distillation}
The goal of \emph{Dataset Distillation} \cite{wang2018datasetdistillation} is to create a small synthetic dataset $\mathcal{S}$ that can train a model as effective as the original training dataset $\mathcal{X}_\text{train}$:
\begin{equation}
    \mathcal{L}(\mathcal{T}(\theta, \mathcal{S}), \mathcal{X}_\text{test})\approx\mathcal{L}(\mathcal{T}(\theta, \mathcal{X}_\text{train}), \mathcal{X}_\text{test}).
\end{equation}
This is achieved by optimizing $\mathcal{S}$ to minimize the evaluation loss of ${\mathcal{T}(\theta, \mathcal{S})}$ over $\mathcal{X}_\text{test}$, which can be formulated as the following bi-level optimization problem: 
\begin{equation}
    \begin{aligned}
    \mathcal{S}^* &= \underset{\mathcal{S}}{\text{argmin}} \: \mathcal{L}(\mathcal{T}(\theta, \mathcal{S}), \mathcal{X}_\text{test}).
    \end{aligned}
    \label{biopt}
\end{equation}
Here, $\mathcal{S}$ is treated as learnable parameters and optimized using algorithms such as stochastic gradient descent (SGD).
However, directly optimizing Equation~\ref{biopt} is computationally expensive due to the inclusion of the entire inner-loop training process $\mathcal{T}(\theta, \mathcal{S})$.
To mitigate this, surrogate objectives that approximate the original optimization problem have been proposed \cite{zhao2021datasetcondensation, cazenavette2022dataset, zhao2023distribution}.

In dataset distillation, the effectiveness of a synthetic dataset $\mathcal{S}$ hinges on its ability to effectively train arbitrary models, not just the one used for its creation.
This \textit{cross-architecture generalization} ensures that $\mathcal{S}$ captures the essential knowledge of the original dataset $\mathcal{X}_\text{train}$ rather than merely memorizing a specific model's training process.

\subsection{Dataset Distillation for TSF} 

The objective is to obtain a synthetic sequence $\mathcal{S} \in \mathbb{R}^{M \times C}$ that serves as a compressed substitute for the original time series $\mathcal{X} \in \mathbb{R}^{N \times C}$, where $M \ll N$.\footnote{For simplicity, $\mathcal{X}$ will henceforth be considered to include only the training windows (i.e., $\mathcal{X} := \mathcal{X}_\text{train}$).}
A straightforward way to extend image dataset distillation methods to TSF datasets is to randomly sample windows from $\mathcal{X}$ and $\mathcal{S}$ and treat them as individual instances \cite{zhao2021datasetcondensation, cazenavette2022dataset, cui2023scaling, ding2024condtsf}. Specifically, updating $\mathcal{S}$ involves randomly sampling mini-batches from $\mathcal{S}$ and $\mathcal{X}$, which are represented as follows:
\begin{equation}
\label{window_based}
\mathcal{B_S} \in \mathbb{R}^{|\mathcal{B_S}| \times (l+t)  \times C}, \: \mathcal{B_X} \in \mathbb{R}^{|\mathcal{B_X}| \times (l+t)  \times C}.
\end{equation}
Here, each window has a length of $l+t$ (input window + output window).
The distillation loss (e.g., gradient matching \cite{zhao2021datasetcondensation}) is computed for the sampled mini-batches, and the data points within the sampled windows $\mathcal{B_S}$ are directly updated.
We refer to this approach as \emph{Window-based TSF Dataset Distillation}.

The ``local-to-local" matching of window-based approaches presents critical limitations from two perspectives:

\subsubsection{L1. Limited Scalability} 
Ideally, increasing the size of the synthetic dataset ($M$) should improve performance by capturing more diverse patterns within training dataset. However, with this approach, a larger $M$ merely elongates the existing local patterns rather than capturing the broader, global structure of the time series. Consequently, this leads to diminishing returns, since the added data points provide little valuable information.

\subsubsection{L2. Architectural Overfitting}
The window-based approaches update only the data points within a given window to minimize the specified distillation loss. This local optimization completely ignores the global dependencies that structure the entire time series. With the global context disregarded, the distillation leads the synthetic data to overfit to the specific inductive biases of the fixed backbone model, rather than learning the essential patterns inherent to the original data. Accordingly, this results in a degraded cross-architecture generalization.

\section{Methodology}

To address the limitations of window-based approaches, we propose \emph{\underline{H}armonic Dataset \underline{D}istillation for \underline{T}ime Series Forecasting} (\textbf{HDT}). The core idea is to decompose the time series into its sinusoidal basis through FFT and perform distillation in the frequency domain. Instead of updating data points within local windows, our method directly updates the selected dominant basis functions, which we refer to as \emph{harmonics}. The optimization is guided by two components: (1) \emph{Harmonic Matching} to align the global harmonic distributions, and (2) \emph{Gradient Matching} as a distillation loss. This approach allows HDT to effectively capture global structures and overcome the limitations of window-based methods. An overview of our method is depicted in Figure~\ref{overview}.

\begin{figure*}[t!]
\centering
\includegraphics[width=\textwidth]{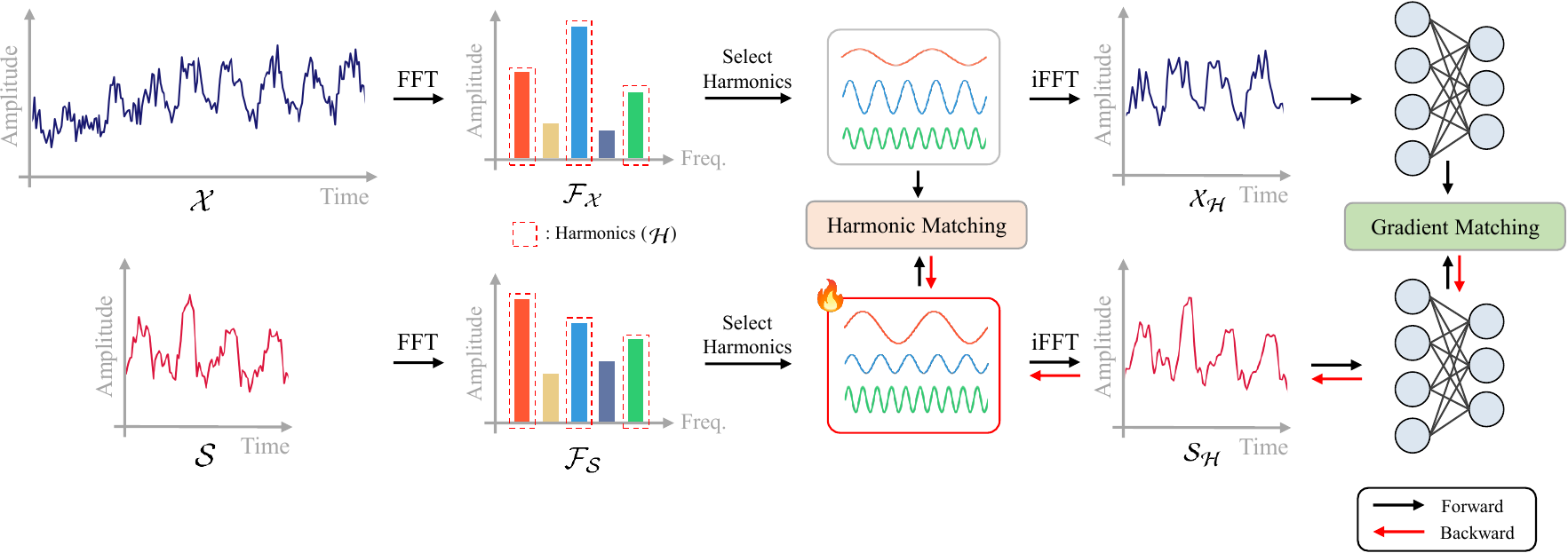}
\caption{An overview of our method. Selected harmonics (red box) of synthetic data are updated through \emph{Harmonic Matching} and \emph{Gradient Matching}.
}
\label{overview}
\end{figure*}

\subsection{Harmonic Matching}

The primary objective of Harmonic Matching is to align the harmonic distributions of the synthetic data $\mathcal{S}$ and the original data $\mathcal{X}$. To achieve this, we begin by decomposing the time series into a sum of sinusoidal basis functions with FFT.

A precise alignment of basis functions between $\mathcal{X}$ and $\mathcal{S}$ is required, but their different lengths prevent a direct one-to-one matching of components that represent the \emph{same period}. Furthermore, a longer sequence produces a high-resolution spectrum in which numerous high-frequency components can act as noise, diluting the prominence of harmonics. To resolve these issues, we sample a subsequence $\mathcal{X}_\text{sub}$ from $\mathcal{X}$ with the same length $M$ as $\mathcal{S}$ and apply the FFT to both to obtain their frequency domain representations:

\begin{equation}
\begin{aligned}
\mathcal{F_{X}} = \texttt{FFT}(\mathcal{X}_{\text{sub}}),& \quad \mathcal{F_S} = \texttt{FFT}(\mathcal{S}). 
\end{aligned}
\end{equation}
Here, the FFT efficiently computes the Discrete Fourier Transform (DFT), which decomposes the sequences $\mathcal{X}_{sub} = \{x_n\}^{M-1}_{n=0}$ and $\mathcal{S} = \{s_n\}^{M-1}_{n=0}$ of length $M$ into its frequency components $\mathcal{F_X}$ and $\mathcal{F_S}$ as follows:
\begin{equation}
\begin{aligned}
\mathcal{F_X}[k] &= \sum_{n=0}^{M-1} x_n e^{-i 2\pi k n / M}, \quad \text{and}\\
\mathcal{F_S}[k] &= \sum_{n=0}^{M-1} s_n e^{-i 2\pi k n / M}, \quad k=0, \dots, M-1.
\label{eq:dft}
\end{aligned}
\end{equation}

Using all these frequency components is often suboptimal because it incorporates unnecessary noise, which can obscure the true underlying patterns. Therefore, we selectively distill the most significant basis functions, which correspond to the top-k frequencies with the largest amplitudes in $\mathcal{F_X}$.
We refer to these top-k components as \emph{harmonics} $\mathcal{H}$, and obtain the following harmonics-based sequence representations:
\begin{equation}
\begin{aligned}
        \mathcal{H} &= \text{arg} \: \text{top-}k_{i\in[0, \lfloor M/2\rfloor ]} \: (|\mathcal{F_X}[i]|) \\
\tilde{\mathcal{F_X}}[i] :&= 
\begin{cases}
\mathcal{F_X}[i], & i \in \mathcal{H} \\
0, & \text{otherwise}
\end{cases} \quad \forall i \in [0, \lfloor M/2\rfloor], \\
\tilde{\mathcal{F_S}}[i] :&= 
\begin{cases}
\mathcal{F_S}[i], & i \in \mathcal{H} \\
0, & \text{otherwise}
\end{cases} \quad \forall i \in [0, \lfloor M/2\rfloor].
\end{aligned}
\label{eq:fx}
\end{equation}
During the distillation process, we directly update these selected harmonics instead of data points within local windows. Since each harmonic is a sinusoidal basis function with a global influence, every update modifies the synthetic series as a whole.

This global update mechanism is the key to resolving the scalability limitation (\textbf{L1}). The length of the synthetic series determines the range of periods that the sinusoidal basis can represent. Increasing $M$ allows our method to capture the harmonics corresponding to \emph{longer periods}, which contain a richer set of long-range global structures. This provides a key advantage over window-based methods, which are restricted to learning additional local patterns without capturing a broader context. This ensures that the performance of our method scales meaningfully with the size of the synthetic dataset.

To enforce similarity between the selected harmonics, we introduce a harmonic loss $\mathcal{L}_\text{harm}$, which minimizes the Lp-norm distance between the amplitudes of these harmonic coefficients:
\begin{equation}
    \mathcal{L}_\text{harm} = \| \: | \tilde{\mathcal{F_X}}| -  |\tilde{\mathcal{F_S}}|\:\|_p.
\label{eq:freq_loss}
\end{equation}
Minimizing this loss acts as a regularizer, forcing the periodic structure of $\mathcal{S}$ to align with that of $\mathcal{X}$. This resolves the architectural overfitting problem \textbf{(L2)}, since the synthetic data learns to capture the harmonic distribution, which is an intrinsic model-agnostic property of the data, rather than overfitting to the specific biases of a single backbone. The formal justification for how minimizing $\mathcal{L}_\text{harm}$ preserves the global structure of the original series is provided in Theorem~\ref{thm:minimize_FX_FS}.

\newcommand{\Var}{\mathrm{Var}}
\newcommand{\Cov}{\mathrm{Cov}}
\newcommand{\E}{\mathbb{E}}
\newcommand{\psd}{{PSD}}
\newcommand{\ifft}{\texttt{iFFT}}
\newcommand{\fft}{\texttt{FFT}}

\subsubsection{Theoretical Analysis for Harmonic Matching}

We further provide a theoretical justification for how \emph{Harmonic Matching} successfully preserves the global structure of time series data. Our analysis is grounded in the relationship between a series's Power Spectral Density (PSD), which describes its power distribution across frequencies, and its Autocorrelation Function (ACF), which measures its temporal dependencies. The core principle is that by aligning the most significant components of the PSD, the \emph{harmonics}, we ensure that the autocorrelation structures of the original and synthetic data are also aligned.

\newtheorem{theorem}{Theorem}

\begin{theorem}
\label{thm:minimize_FX_FS}
\quad
 Let $\mathcal{F_X}$
  and $\mathcal{F_S}$ are the DFTs of an $M$-point 
subset (segment) of $\mathcal{X}$ and $\mathcal{S}$, and let
  $r_\mathcal{X}(k)$ and $r_\mathcal{S}(k)$ denote their respective ACFs at lag
  $k$. Suppose we choose $\mathcal{S}$ so as to minimize $\|\:|\mathcal{F_X}| - |\mathcal{F_S}|\:\|_p$. 
Then, for a given maximum lag $K$, there exists a constant $C>0$ such that the difference in their ACFs is bounded:
\[
\max_{|k|\le K}
\bigl|r_\mathcal{S}(k) - r_\mathcal{X}(k)\bigr|
\;\le\;
C\,\varepsilon,
\]
where $\varepsilon$ is a measure of how closely $\mathcal{F_S}$ approximates $\mathcal{F_X}$ 
in the frequency domain.
\end{theorem}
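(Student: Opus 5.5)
The natural route is the discrete Wiener--Khinchin theorem. For an $M$-point sequence, define the circular ACF $r_\mathcal{X}(k) = \frac{1}{M}\sum_{n=0}^{M-1} x_n x_{(n+k)\bmod M}$, and similarly $r_\mathcal{S}(k)$; a biased linear ACF can be treated the same way after zero-padding to length $2M$. The first step is the standard identity
\[
r_\mathcal{X}(k) = \frac{1}{M^2}\sum_{j=0}^{M-1} |\mathcal{F_X}[j]|^2 e^{i2\pi jk/M},
\]
which follows by substituting Equation~\ref{eq:dft} into the definition and using orthogonality of the exponentials. The same identity holds for $\mathcal{S}$. The ACF therefore depends on the sequence only through the squared amplitudes (the PSD), and phases drop out entirely. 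This is exactly why an amplitude-only loss such as $\mathcal{L}_\text{harm}$ can control temporal dependence.

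Subtracting the two identities and applying the triangle inequality gives, for every $k$,
\[
\bigl|r_\mathcal{S}(k)-r_\mathcal{X}(k)\bigr| \le \frac{1}{M^2}\sum_{j}\bigl||\mathcal{F_S}[j]|^2-|\mathcal{F_X}[j]|^2\bigr|.
\]
The right-hand side is uniform in $k$, so the maximum over $|k|\le K$ costs nothing; the dependence on $K$ only enters for the linear ACF through the zero-padding. Next, factor $a^2-b^2=(a-b)(a+b)$ and use H\"older with conjugate exponent $q$:
\[
\sum_j \bigl||\mathcal{F_S}[j]|-|\mathcal{F_X}[j]|\bigr|\,\bigl(|\mathcal{F_S}[j]|+|\mathcal{F_X}[j]|\bigr) \le \bigl\|\,|\mathcal{F_X}|-|\mathcal{F_S}|\,\bigr\|_p\;\bigl\|\,|\mathcal{F_X}|+|\mathcal{F_S}|\,\bigr\|_q .
\]
Set $\varepsilon := \|\,|\mathcal{F_X}|-|\mathcal{F_S}|\,\|_p$, the quantity minimized by the choice of $\mathcal{S}$. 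Then $C = M^{-2}\,\|\,|\mathcal{F_X}|+|\mathcal{F_S}|\,\|_q$.

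The main obstacle is showing that $C$ is a genuine constant, independent of $\mathcal{S}$. I would address it as follows. Use $\|\,|\mathcal{F_S}|\,\|_q \le \|\,|\mathcal{F_X}|\,\|_q + \|\,|\mathcal{F_X}|-|\mathcal{F_S}|\,\|_q$, and bound the last term by $M^{\max(0,1/q-1/p)}\varepsilon$ via norm equivalence on $\mathbb{C}^M$. This gives $C \le M^{-2}\bigl(2\|\mathcal{F_X}\|_q + c_M\varepsilon\bigr)$, which is bounded by a constant depending only on $\mathcal{X}$ and $M$ whenever $\varepsilon$ is bounded. Boundedness holds in particular at the minimizer, where $\varepsilon \le \|\mathcal{F_X}\|_p$ because $\mathcal{S}=0$ is a feasible competitor. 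Alternatively, one can assume the data are bounded, say normalized series with $|s_n|,|x_n|\le B$, so that $|\mathcal{F}[j]|\le MB$ via Parseval.

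When matching is restricted to the harmonics $\mathcal{H}$ as in Equation~\ref{eq:fx}, I would split the sum into $j\in\mathcal{H}$ (and the conjugate-symmetric partners) and $j\notin\mathcal{H}$. The first part is bounded as above. The second part is a residual tail of non-dominant power, which is small by the top-$k$ selection, and it can be absorbed into $\varepsilon$ by defining $\varepsilon$ as the full-spectrum amplitude discrepancy.
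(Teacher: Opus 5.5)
Your main argument is correct, and it takes a genuinely different route from the paper.

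\textbf{How the two proofs differ.} The paper models $\mathcal{X}$ and $\mathcal{S}$ as stationary AR processes and proceeds in three stages:
\begin{enumerate}
\item It bounds the periodogram gap on the DFT grid with the same $(a-b)(a+b)$ factorization. It uses a per-component bound plus an assumed uniform bound on the magnitudes, which is effectively $\ell_\infty$ control.
\item It extends this closeness to the population PSDs on all of $[-\pi,\pi]$ by Lipschitz interpolation.
\item It integrates against $e^{i\omega k}$ using the continuous Wiener--Khinchin theorem.
\end{enumerate}
You instead use the exact finite identity $r(k)=M^{-2}\sum_j|\mathcal{F}[j]|^2e^{i2\pi jk/M}$ for the empirical circular ACF, together with H\"older's inequality. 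No stochastic model, smoothness assumption, or interpolation is needed.

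\textbf{What your route buys.} You get a deterministic bound $M^{-2}\bigl(2\|\,|\mathcal{F_X}|\,\|_q\,\varepsilon+c_M\varepsilon^2\bigr)$ with these properties:
\begin{itemize}
\item it holds for every $\mathcal{S}$, so your competitor argument at the minimizer is not needed;
\item it works for every $p$;
\item it is uniform in $k$;
\item it vanishes as $\varepsilon\to 0$.
\end{itemize}

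\textbf{What the paper's route aims at, and what it costs.} The paper targets the population ACF of the underlying process, which is the more meaningful statistical object. It pays for this in two places. Step 2 assumes, rather than derives, closeness of the population PSDs at the grid points; Step 1 only controls periodograms, and periodograms are not consistent PSD estimates. Also, its constant $C=1+(\ell_{\mathcal{S}}+\ell_{\mathcal{X}})\Delta/\varepsilon$ depends on $\varepsilon$. Hence $C\varepsilon=\varepsilon+(\ell_{\mathcal{S}}+\ell_{\mathcal{X}})\Delta$, which does not shrink with $\varepsilon$.

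\textbf{Side remark 1: zero-padding does not handle the linear ACF.} The hypothesis controls only the $M$-point spectrum. That is the even-indexed half of the $2M$-point padded DFT; the odd bins are unconstrained. For example, $(1,0,0,1)$ and its circular shift $(1,1,0,0)$ have identical $4$-point amplitude spectra, so $\varepsilon=0$. Yet their biased linear lag-1 ACFs are $0$ and $1/4$.

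The correct repair is to compare the linear and circular ACFs directly. The wrap-around term $\frac{1}{M}\sum_{m=0}^{k-1}x_{M-k+m}x_m$ is at most $|k|B^2/M$ for data bounded by $B$. This gives the bound $C\varepsilon+2KB^2/M$. That additive edge term is where $K$ genuinely enters.

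\textbf{Side remark 2: the top-$k$ tail.} Top-$k$ selection does not by itself make the off-harmonic tail small. It constrains neither the residual power of $\mathcal{F_X}$ nor $|\mathcal{F_S}|$ off $\mathcal{H}$. Your fallback, taking $\varepsilon$ as the full-spectrum discrepancy, is exactly what the theorem states, and it is sufficient.
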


Theorem 1 provides the theoretical foundation for Harmonic Matching. The inequality shows that the maximum error between the autocorrelation of the original and synthetic datasets is directly controlled by $\varepsilon$, the frequency domain approximation error. Since Harmonic Matching is explicitly designed to minimize this error, the theorem guarantees that it effectively preserves the underlying temporal dependencies of the original time series. For a detailed derivation and the necessary technical assumptions, refer to the full proof in Appendix. 

\subsection{Gradient Matching} \label{gradient_matching}
For use in the subsequent gradient matching step, we reconstruct time-domain signals containing only these essential harmonic patterns via the inverse FFT (iFFT):
\begin{equation}
       \mathcal{X_H} = \texttt{iFFT}( \tilde{\mathcal{F_X}}), \quad \mathcal{S_H} = \texttt{iFFT}( \tilde{\mathcal{F_S}}).
\label{eq:recon}
\end{equation}

We employ the surrogate objective similar to previous works \cite{cazenavette2022dataset, cui2023scaling}, which matches the multi-step gradients with respect to the model parameters $\theta$ when the model is trained on $\mathcal{S}$ versus on $\mathcal{X}$. We train the model parameter $\theta$ on $\mathcal{X_H}$ for $i$ steps and on $\mathcal{S_H}$ for $j$ steps, then use their normalized Euclidean distance as the distillation loss:
\begin{equation}
    \mathcal{L}_{\text{grad}} = \frac{||\mathcal{T}_j(\theta, \mathcal{S_H})-\mathcal{T}_i(\theta, \mathcal{X_H})||^2_2}{||\theta-\mathcal{T}_i(\theta, \mathcal{X_H})||^2_2}.
\label{grad_loss}
\end{equation}
Here, $\mathcal{T}_i$ denotes the $i$-step training process.

By combining $\mathcal{L}_\text{grad}$ with $\mathcal{L}_\text{harm}$, we arrive at the final objective function:
\begin{equation}
    \underset{\mathcal{F_S}}{\text{argmin}} \: \mathcal{L}_\text{grad} + \lambda \mathcal{L}_\text{harm},
\label{eq:final_loss}
\end{equation}
where $\lambda$ serves as a hyperparameter to balance the contributions of the two losses. Once $\mathcal{F_S}$ has converged, the final distilled dataset $\mathcal{S}$ is recovered with the iFFT. The entire distillation process is described in Appendix.

\input{total_table}

\section{Experiments}
\subsection{Setup}

\subsubsection{\textbf{Datasets}}
We evaluated our method on widely used TSF benchmarks: the ETT, Electricity and Traffic datasets \cite{lstnet, informer}. The details of each dataset are provided in Appendix.

\subsubsection{\textbf{Backbones}}
Previous state-of-the-art methods \cite{ding2024condtsf} 
have employed simple models (e.g., MLP, LSTM, CNN) as backbones. To evaluate the
generality of the dataset distillation algorithm, we instead
use recent state-of-the-art TSF models: DLinear \cite{zeng2023transformers},
iTransformer \cite{itransformer}, and xPatch \cite{xpatch}, which are the representatives of Linear, Transformer, and
CNN architectures, respectively. The details of
the models are in Appendix.

\subsubsection{\textbf{Baselines}}
We compare our method against several baselines, starting with a naive `Random' that trains on a randomly selected subsequence of the original data. We then include state-of-the-art dataset distillation methods: DC~\cite{zhao2021datasetcondensation}, MTT~\cite{cazenavette2022dataset}, TESLA~\cite{cui2023scaling}, and CondTSF~\cite{ding2024condtsf}. The performance of `Full Data' training is also reported as a lower bound of MSE. Detailed descriptions of each baseline are provided in the Appendix.

\begin{figure*}[t]
  \centering
  \includegraphics[width=\textwidth]{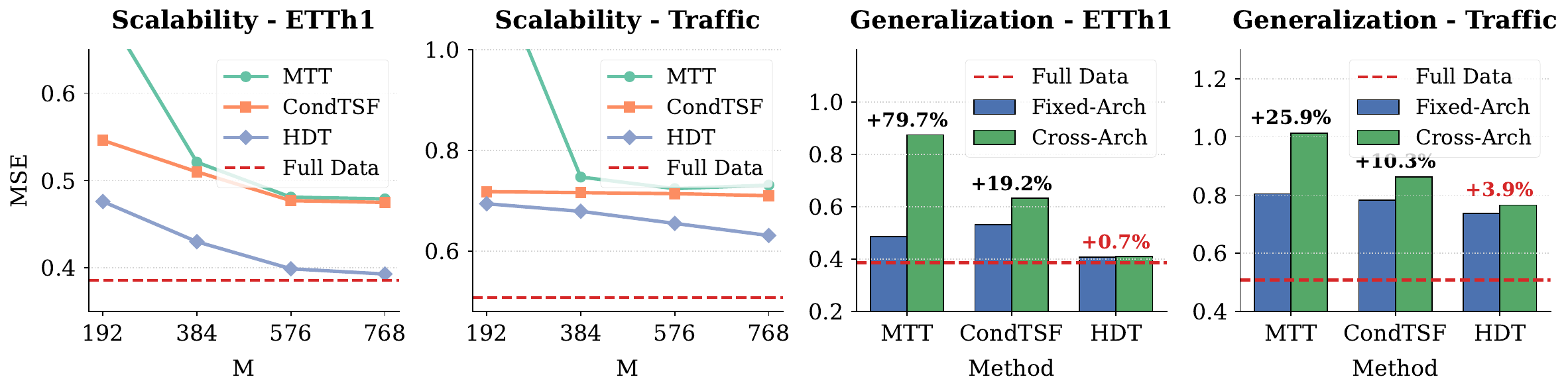}
  \caption{Scalability and cross-architecture generalization performance on the ETTh1 and Traffic datasets. The left two plots show scalability results for varying synthetic data sizes ($M$). The right two plots compare the performance between fixed- and cross-architecture settings, highlighting that HDT maintains a significantly smaller increase in MSE compared to others.}
\label{scalability}
\end{figure*}

\subsection{Overall Results} 
We conducted comprehensive experiments to validate the empirical performance of HDT, following the evaluation protocol described in Appendix.
The evaluation covers both \emph{fixed-architecture} settings, where the backbone and evaluation models are the same, and \emph{cross-architecture} settings, where they differ. Table~\ref{table:performance} presents the overall results for $M=384$, using DLinear (L), iTransformer (T), and xPatch (C) as the backbone and the evaluation models.

A key finding is that while prior methods perform well in fixed-architecture settings, they suffer from unreliable performance in cross-architecture scenarios. Their performance often degrades significantly when the backbone and evaluation models differ, in some cases performing even worse than the `Random' baseline (e.g., xPatch backbone evaluated on DLinear). This highlights their severe architectural overfitting (\textbf{L2}), as the distilled data is over-specialized to a single model.

In contrast, HDT not only achieves state-of-the-art performance across nearly all settings but also demonstrates remarkable robustness. Its performance remains strong and stable in cross-architecture settings, indicating that it distills a more universal, model-agnostic representation of the original data. The two rightmost plots of Figure~\ref{scalability} (derived from Table~\ref{table:performance}) clearly illustrate that only HDT maintains a minimal increase in MSE in cross-architecture settings.

Finally, to evaluate the scalability of our method (\textbf{L1}), we measured the performance as the synthetic data size $M$ increases. The two leftmost plots in Figure~\ref{scalability} show that HDT's performance consistently improves with a larger $M$, while others saturate beyond a certain size. This confirms HDT's ability to effectively capture the long-range context of the time series.

To supplement our main findings, we provide additional results including a detailed hyperparameter analysis, experiments with a smaller synthetic dataset, and qualitative visualizations in Appendix. 

\subsection{Ablation Study}
To validate the contribution of each key component in HDT, we conducted an ablation study using the DLinear backbone. We compared three distinct configurations, measuring their average performance across all evaluation models (L, T, and C): (1) Base, which uses only conventional gradient matching on time windows; (2) Base + Decomp., which performs gradient matching only with sinusoidal decomposition; and (3) HDT, the full proposed method.

The results in Table~\ref{ablation} show the effectiveness of our components. The improvement from `Base' to `Base + Decomp.' demonstrates the advantage of operating in the frequency domain, as each update has a global influence over the entire sequence. The final substantial improvement of `HDT' is delivered by Harmonic Matching, which distills the essential periodic patterns by aligning the distribution of selected harmonics between the synthetic and original data.

\begin{table}[t]
    \centering
    \renewcommand{\arraystretch}{1.1}
    \resizebox{\columnwidth}{!}{
    \begin{tabular}{l|cccccc}
        \toprule
        Method & ETTh1 & ETTh2 & ETTm1 & ETTm2 & Electricity & Traffic\\ 
        \midrule
        Base           & 0.583 & 0.465 & 0.905 & 0.402 & 0.414 & 0.934 \\
        Base + Decomp.  & 0.545 & 0.420 & 0.814 & 0.325 & 0.376 & 0.902\\
        HDT & \textbf{0.420} & \textbf{0.334} & \textbf{0.386} & \textbf{0.206} & \textbf{0.226} & \textbf{0.760}  \\
        \bottomrule

    \end{tabular}
    }
    \caption{Ablation study on the components of HDT. Base denotes conventional gradient matching, and Decomp. refers to the use of the sinusoidal decomposition.}  
\label{ablation}
    \renewcommand{\arraystretch}{1.0}
\end{table}

\begin{table}
    \centering
    \footnotesize
    \renewcommand{\arraystretch}{0.9}
    \begin{tabularx}{\columnwidth}{ll S S S}
        \toprule
        Backbone & Dataset & {MTT} & {CondTSF} & {HDT} \\
        \midrule
        \multirow{2}{*}{DLinear} & ETTh1 & 30.06 & 32.13 & 36.37 \\
        & Electricity & 35.61 & 38.04 & 42.76 \\
        \midrule
        \multirow{2}{*}{iTransformer} & ETTh1 & 197.69 & 199.95 & 209.46 \\
        & Electricity & 253.90 & 262.10 & 264.90 \\
        \bottomrule
    \end{tabularx}
    \caption{Comparison of running times over 100 outer-loops in seconds. The cost increase of HDT is marginal.}
    \label{tab:runtime}
\end{table}

\newcolumntype{C}{>{\centering\arraybackslash}X}

\subsection{Efficiency Analysis}

\subsubsection{Distillation Runtime}
We report the distillation runtime (in seconds) over 100 outer-loop steps in Table~\ref{tab:runtime}. Although HDT incorporates the FFT, its theoretical complexity of \(O(M \log M)\) is minor compared to the gradient computations of the backbone model. As Table~\ref{tab:runtime} shows, the runtime gap between HDT and other methods narrows as model complexity increases (from DLinear to iTransformer), confirming that the overall overhead of HDT is marginal. These results, combined with the performance gains reported in Table~\ref{table:performance}, demonstrate that HDT achieves superior performance with only a slight increase in computational cost.

\subsubsection{Training Efficiency}
We measured the actual training time of iTransformer on the full dataset versus our distilled dataset. As shown in Table~\ref{tab:traintime}, training on the distilled data achieves a dramatic speed-up, reducing training time from hours to mere seconds. This highlights the substantial computational benefits of the distilled dataset.

\subsection{Large Scale Scenario}\label{largescale}

\subsubsection{Performance on Large-Scale Datasets}
Evaluations in TSF often rely on lightweight datasets that may not represent the scale of real-world data. To address this gap, we evaluated HDT on the massive CA dataset from the LargeST \cite{largest} benchmark, which has a length of 201,363 and 8,600 features. For the experiment, we distilled this dataset to a size of $M=384$ using the DLinear backbone. As shown in Table~\ref{tab:large_data}, HDT scales effectively to massive data, significantly outperforming previous distillation methods and approaching the performance of full-data training.

\subsubsection{Fine-tuning Large Foundation Models}
We further investigated HDT's utility in the context of large foundation models. We evaluated the pretrained Moirai-Large (311M parameters) \cite{moirai} on the ETTh1 dataset under three scenarios: zero-shot, full fine-tuning, and few-shot fine-tuning with a dataset distilled with HDT. Table~\ref{tab:large_model} shows that the distilled dataset enables substantial performance gains over zero-shot with a minimal fraction of the training cost of full fine-tuning, demonstrating a highly favorable cost-performance trade-off. Specifically, our approach is 80x faster than full fine-tuning while incurring only a 2.5\% degradation in MSE performance. Even including the distillation process (end-to-end), HDT still achieves an 8.51x speed-up.

\begin{table}[t]
    \centering
    \footnotesize
    \renewcommand{\arraystretch}{0.9}
    \begin{tabularx}{\columnwidth}{l C C C}
        \toprule
        Dataset & {Full Training} & {Distilled} & {Speed-up} \\
        \midrule
        Electricity & 1650.33 & 1.98 & \textbf{834x} \\
        Traffic & 4266.45 & 2.32 & \textbf{1839x} \\
        \bottomrule
    \end{tabularx}
    \caption{Training time comparison in seconds with the iTransformer backbone, demonstrating the significant efficiency gains achieved by training on the distilled dataset.}
    \label{tab:traintime}
\end{table}

\begin{table}[t]
    \footnotesize
    \centering
    \renewcommand{\arraystretch}{0.9}
    \begin{tabularx}{\columnwidth}{lCCCC}
        \toprule
         & Random & CondTSF & HDT & Full Data \\
        \midrule
        MSE & 358.51 & 197.95 & \textbf{46.63} & 44.25 \\
        \bottomrule
    \end{tabularx}
    \caption {Distillation performance (MSE) on CA dataset with the DLinear backbone. HDT's performance closely approaches that of full-data training.}
    \label{tab:large_data} 
\end{table}

\begin{table}[t]
    \centering
    \footnotesize
    \renewcommand{\arraystretch}{0.9}
    \begin{tabularx}{\columnwidth}{lcc}
        \toprule
        Setting & {MSE} & {Training Time} \\
        \midrule
        Zero-shot & 1.972 & {-} \\
        \midrule
        Full Fine-tuning & 1.383 & 3775.68s \\
        Few-shot with HDT (end-to-end) & 1.417 & 443.51s \\
        Few-shot with HDT & 1.417 & 47.23s \\
       \midrule
        Comparison & {\textbf{+2.5\%}} & {\textbf{80x (8.51x)}} \\ 

        \bottomrule
    \end{tabularx}
    \caption{Fine-tuning the Moirai-Large (311M) on ETTh1. 
    The final row shows the MSE increase and training speed-up of our method relative to full fine-tuning.}
    \label{tab:large_model}
\end{table}

\section{Related Works}
\subsection{Time Series Forecasting}

Recent research in Time Series Forecasting (TSF) has evolved along several distinct architectural paradigms. Transformer-based models are prominent for their ability to capture long-range dependencies \cite{itransformer, nie2023time}, simple Linear-based models have demonstrated surprisingly strong performance \cite{zeng2023transformers, ekambaram2023tsmixer, xu2024fits, timemixer}, and CNN-based approaches have proven effective for capturing local temporal patterns \cite{wu2022timesnet, donghao2024moderntcn, xpatch}. A more recent trend is the development of large pre-trained foundation models, which have shown zero-shot forecasting capabilities \cite{timesfm, moirai}. To demonstrate the general applicability of HDT across these distinct paradigms, we select representative models from each: iTransformer \cite{itransformer}, DLinear \cite{zeng2023transformers}, xPatch \cite{xpatch}, and Moirai \cite{moirai}.

\subsection{Dataset Distillation}
Dataset Distillation (DD) aims to create a compact synthetic dataset that yields training performance comparable to that of the original dataset. This concept was first introduced in \cite{wang2018datasetdistillation}, but it suffered from suboptimal performance due to the need for nested-loop optimization. To address this issue, three main approaches have been developed: (1) optimizing surrogate objective, (2) kernel-based methods, and (3) parameterization. Instead of directly minimizing the final performance, several works optimize surrogate objectives such as matching gradients \cite{zhao2021datasetcondensation}, training trajectories \cite{cazenavette2022dataset, cui2023scaling, guo2024datm}, or feature distributions \cite{zhao2023distribution, zhang2024m3d} between the synthetic and original datasets. Kernel-based methods simplify the optimization process with kernel ridge regression to derive a closed-form solution for the inner loop \cite{nguyen2021kip, nguyen2021dataset, zhou2022dataset}. Parameterization methods optimize alternative parameters instead of directly optimizing image pixels \cite{cazenavette2023generalizing, liu2023mgdd, liu2023fewshot, wei2023sparse, Fred}. Our work combines a surrogate objective and a parameterization method, both tailored for the unique challenges of the TSF dataset distillation.

\section{Conclusion}

In this work, we introduced HDT, a novel dataset distillation paradigm for TSF. By shifting the distillation process to the frequency domain to align the \emph{harmonics}, HDT effectively preserves the global structure of the original series. Our experiments demonstrate that this approach resolves the critical limitations of prior methods: limited scalability and architectural overfitting. Furthermore, we validated HDT's practical utility in large-scale, real-world scenarios, including its performance on massive datasets and its effectiveness in fine-tuning foundation models. As the need for efficient data handling becomes increasingly critical in domains such as online learning and resource-constrained settings, the application of DD for TSF presents a promising avenue for future research.

\section*{Acknowledgements}
This work was supported by the National Research Foundation of Korea (NRF) grant funded by the Korea government (MSIT) (No. RS-2023-00217286, No. RS-2024-00335873), Institute of Information \& communications Technology Planning \& Evaluation (IITP) grant funded by the Korea government(MSIT) (No.RS-2019-II191906, Artificial Intelligence Graduate School Program(POSTECH)), and the Technology Innovation Program (No. RS-2025-02952974) funded by the Ministry of Trade, Industry \& Energy (MOTIE, Korea).

\bibliography{aaai2026}

\setcounter{secnumdepth}{2}

\newpage 
\clearpage 

\appendix
\renewcommand\thesubsection{\thesection.\arabic{subsection}}

\section{Method Details}
\subsection{Proof of Theorem 1.}\label{theoremproof}
In this section, we provide the full proof of Theorem \ref{thm:minimize_FX_FS}. Here are the preliminaries and notations:

\begin{itemize}[leftmargin=15pt]
  \item Assume that both time series \( \mathcal{X} \)\footnote{For simplicity, throughout this proof, we use $\mathcal{X}$ to denote $\mathcal{X}_\text{sub}$.} and \( \mathcal{S} \) follow an autoregressive process \cite{autoreg}. 
  \item \(\omega_j\) denotes a discrete set of frequencies where we approximate or evaluate the PSD.
  \item \(r_X(k)\) and \(r_S(k)\) represent the ACF at lag \(k\) for \(\mathcal{X}\) and \(\mathcal{S}\), respectively.
  \item \(\mathcal{F_X}\) and \(\mathcal{F_S}\) are the discrete Fourier transforms (DFTs) of \(\mathcal{X}\) and \(\mathcal{S}\), respectively.
  \item We assume \(\psd_\mathcal{X}(\omega)\) and \(\psd_\mathcal{S}(\omega)\) are well-defined, with \
  \(\psd_\mathcal{X}(\omega)\) corresponding to a stationary or stable AR process \cite{autoreg}.
\end{itemize}
\begin{proof}[Proof]

We outline the key steps:
\paragraph{Step 1: Linking DFT distance to PSD distance.}
Let $\mathcal{F_X}(\omega_j)$ and $\mathcal{F_S}(\omega_j)$ denote the DFT coefficients 
of $\mathcal{X}$ and $\mathcal{S}$ at frequency
$\omega_j=\frac{2\pi j}M$, for $j=0,\dots,M-1$.  
By assumption, we pick $\mathcal{S}$ so that $\|\:|\mathcal{F_X}| - |\mathcal{F_S}|\:\|_p$ is minimized. 
In particular, each component difference 
$\bigl||\mathcal{F_X}(\omega_j)| - |\mathcal{F_S}(\omega_j)|\bigr|$ 
is controlled by a small quantity (call it $\delta$).  
Since the periodogram (or empirical PSD) at $\omega_j$ is given by
\[
\widehat{f_\mathcal{X}}(\omega_j)
\;=\;
\frac{1}{M}\,\bigl|\mathcal{F_X}(\omega_j)\bigr|^2,
\quad
\widehat{f_\mathcal{S}}(\omega_j)
\;=\;
\frac{1}{M}\,\bigl|\mathcal{F_S}(\omega_j)\bigr|^2,
\]

the difference in PSDs at $\omega_j$ can be bounded as 

\[
\begin{aligned}
&\bigl|\widehat{f_\mathcal{X}}(\omega_j) - \widehat{f_\mathcal{S}}(\omega_j)\bigr|\\
\;=\;&
\frac{1}{M}
\Bigl|\,
\bigl|\mathcal{F_X}(\omega_j)\bigr|^2 - \bigl|\mathcal{F_S}(\omega_j)\bigr|^2
\Bigr| \\
\;\le\;&
\frac{1}{M}\,\bigl||\mathcal{F_X}(\omega_j)| - |\mathcal{F_S}(\omega_j)|\bigr|\,
\bigl(\lvert \mathcal{F_X}(\omega_j)\rvert + \lvert \mathcal{F_S}(\omega_j)\rvert\bigr).
\end{aligned}
\]

Hence, whenever 
$\bigl||\mathcal{F_X}(\omega_j)| - |\mathcal{F_S}(\omega_j)|\bigr| \le \delta$ and the magnitudes $\lvert \mathcal{F_X}(\omega_j)\rvert$, 
$\lvert \mathcal{F_S}(\omega_j)\rvert$ remain in a bounded set 
(say each $\le K$), it follows that
\[
\bigl|\widehat{f_\mathcal{X}}(\omega_j) - \widehat{f_\mathcal{S}}(\omega_j)\bigr|
\;\le\;
\frac{K}{M}\,\delta.
\]
We incorporate constants into a single parameter $\varepsilon$, 
so that
\[
\bigl|\widehat{f_\mathcal{X}}(\omega_j) - \widehat{f_\mathcal{S}}(\omega_j)\bigr|
\;\le\;
\varepsilon,
\quad \forall\, \omega_j.
\]
That is, the empirical PSD of $\mathcal{S}$ closely matches that of $\mathcal{X}$ 
at the sampled frequencies.

\paragraph{Step 2: Extending PSD proximity to the entire spectrum.}
Under the assumption that both $\psd_\mathcal{X}(\omega)$ and $\psd_\mathcal{S}(\omega)$ 
are sufficiently smooth (e.g., Lipschitz) on $[-\pi,\pi]$ \cite{kullback}, we can leverage an interpolation/continuity argument as follows. Assume there exists a discrete 
frequency set $\{\omega_j\}_{j=0}^{N-1}$ such that
\[
\bigl|\psd_\mathcal{S}(\omega_j) - \psd_\mathcal{X}(\omega_j)\bigr|
\;\le\;
\varepsilon,
\quad 
\forall\, \omega_j.
\]
Denote the Lipschitz constants of $\psd_\mathcal{X}$ and $\psd_\mathcal{S}$ 
by $\ell_\mathcal{X}$ and $\ell_\mathcal{S}$, respectively.  
Then, for any $\omega \in [-\pi,\pi]$, pick $\omega_j$ nearest to $\omega$ 
so that $|\omega - \omega_j|\le \Delta$, 
where $\Delta$ is half the grid spacing in frequency.  
By the Lipschitz property,
\[
\begin{aligned}
\bigl|\psd_\mathcal{X}(\omega) - \psd_\mathcal{X}(\omega_j)\bigr|
\;&\le\;
\ell_\mathcal{X}\,\bigl|\omega - \omega_j\bigr|,
\quad\\
\bigl|\psd_\mathcal{S}(\omega) - \psd_\mathcal{S}(\omega_j)\bigr|
\;&\le\;
\ell_\mathcal{S}\,\bigl|\omega - \omega_j\bigr|.
\end{aligned}
\]
Hence,
\[
\begin{aligned}
\bigl|\psd_\mathcal{S}(\omega) - \psd_\mathcal{X}(\omega)\bigr|
\;\le&\;
\bigl|\psd_\mathcal{S}(\omega) - \psd_\mathcal{S}(\omega_j)\bigr|\\
&+
\bigl|\psd_\mathcal{S}(\omega_j) - \psd_\mathcal{X}(\omega_j)\bigr|\\
&+
\bigl|\psd_\mathcal{X}(\omega_j) - \psd_\mathcal{X}(\omega)\bigr|
\\[6pt]
\;\le&\;
\ell_\mathcal{S}\,\Delta
\;+\;
\varepsilon
\;+\;
\ell_\mathcal{X}\,\Delta.
\end{aligned}
\]

Define
\[
C
\;=\;
1 \;+\; \ell_\mathcal{S}\,\Delta / \varepsilon
\;+\;
\ell_\mathcal{X}\,\Delta / \varepsilon
,\;\;\;
\]
so that
\[
\bigl|\psd_\mathcal{S}(\omega) - \psd_\mathcal{X}(\omega)\bigr|
\;\le\;
C\,\varepsilon
\quad
\text{for all }\,\omega\in[-\pi,\pi].
\]
In other words, if $\psd_\mathcal{S}(\omega_j)$ is $\varepsilon$-close 
to $\psd_\mathcal{X}(\omega_j)$ at the discrete $\{\omega_j\}$, 
then $\psd_\mathcal{S}(\omega)$ remains on the order of $\varepsilon$-close 
to $\psd_\mathcal{X}(\omega)$ on the entire interval.

\paragraph{Step 3: Translating PSD closeness into ACF closeness.}
By the Wiener--Khinchin theorem \cite{wiener, khintchine}, the ACFs 
$r_\mathcal{X}(k)$ and $r_\mathcal{S}(k)$ are related to the PSDs via
\[
\begin{aligned}
r_\mathcal{X}(k)
&\;=\;
\frac{1}{2\pi}\int_{-\pi}^{\pi}\psd_\mathcal{X}(\omega)\,e^{\,i\,\omega k} \, d\omega,\\
\quad
r_\mathcal{S}(k)
&\;=\;
\frac{1}{2\pi}\int_{-\pi}^{\pi}\psd_\mathcal{S}(\omega)\,e^{\,i\,\omega k} \, d\omega.
\end{aligned}
\]
Thus,
\[
r_\mathcal{S}(k) - r_\mathcal{X}(k)
\;=\;
\frac{1}{2\pi} \int_{-\pi}^{\pi} 
\bigl[\psd_\mathcal{S}(\omega)-\psd_\mathcal{X}(\omega)\bigr]\,e^{\,i\,\omega k} \, d\omega.
\]
Taking absolute values and using $|e^{\,i\,\omega k}|=1$,
\[
\bigl|r_\mathcal{S}(k) - r_\mathcal{X}(k)\bigr|
\;\le\;
\frac{1}{2\pi}\int_{-\pi}^{\pi}
\bigl|\psd_\mathcal{S}(\omega) - \psd_\mathcal{X}(\omega)\bigr|\,
d\omega.
\]
Since $\bigl|\psd_\mathcal{S}(\omega)-\psd_\mathcal{X}(\omega)\bigr|\le C\,\varepsilon$ 
for some constant $C$ (by the continuity argument in Step 2), 
we have
\[
\bigl|r_\mathcal{S}(k) - r_\mathcal{X}(k)\bigr|
\;\le\;
\frac{1}{2\pi}\int_{-\pi}^{\pi}
C\,\varepsilon \, d\omega
\;=\;
C\,\varepsilon.
\]
Hence, for each integer lag $k$,
\[
\bigl|r_\mathcal{S}(k) - r_\mathcal{X}(k)\bigr|
\;\le\;
C\,\varepsilon.
\]

\paragraph{Step 4: Bounding the maximum lag.}
Restricting to lags $\lvert k\rvert \le K$, we trivially obtain
\[
\max_{\lvert k\rvert \,\le\, K}
\bigl|r_\mathcal{S}(k) - r_\mathcal{X}(k)\bigr|
\;\le\;
C\,\varepsilon.
\]

This completes the proof of Theorem~\ref{thm:minimize_FX_FS}.
\end{proof}

\subsection{Algorithm}\label{app_algorithm}
We provide the overall process of HDT in Algorithm 1. The algorithm begins by initializing a random synthetic dataset $\mathcal{S}$ and converting it to its frequency-domain representation, $\mathcal{F_S}$. In each iteration, a subsequence $\mathcal{X}_\text{sub}$ is sampled from the original data, and transformed to $\mathcal{F_X}$ with FFT. The harmonics are then extracted from the spectra to obtain the filtered representations $\tilde{\mathcal{F_X}}$ and $\tilde{\mathcal{F_S}}$, which are used to calculate the harmonic loss $\mathcal{L}_\text{harm}$. These filtered components are also reconstructed into time-domain signals $\mathcal{X_H}$ and $\mathcal{S_H}$ through the inverse FFT to compute the gradient matching loss $\mathcal{L}_\text{grad}$. Combining $\mathcal{L}_\text{harm}$ and $\mathcal{L}_\text{grad}$, the synthetic representation $\mathcal{F_S}$ is updated. After convergence, the final $\mathcal{F_S}$ is converted back to the time domain with an inverse FFT to yield the distilled dataset $\mathcal{S}$.

\section{Experimental Details}

\subsection{Datasets} \label{app_datasets}
We evaluated our method on widely-used public benchmarks for TSF. Detailed statistics for each dataset are provided in Table~\ref{table:stat}, and a brief description of each follows:
\begin{itemize}[leftmargin=15pt]
\item\textbf{ETT}\footnote{https://github.com/zhouhaoyi/ETDataset} consists of oil temperature and six power load features recorded at hourly and 15-minute intervals, collected from electricity transformers from 2016 to 2018.
\item\textbf{Electricity}\footnote{\url{https://archive.ics.uci.edu/dataset/321/electricityloaddiagrams20112014}} records the hourly electricity consumption of 321 clients from 2012 to 2014.
\item\textbf{Traffic}\footnote{https://pems.dot.ca.gov/} provides hourly road occupancy rates measured by sensors on San Francisco Bay area freeways from 2015 to 2016.
\item\textbf{CA}\footnote{https://github.com/liuxu77/LargeST} provides traffic flow data from 8,600 mainline sensors across California, recorded at 5-minute intervals from 2017 to 2021. 

\end{itemize}

\begin{table}[h]
    \renewcommand{\arraystretch}{0.8}
   \centering
   \resizebox{\columnwidth}{!}{
    \begin{tabular}{r|ccccccc}
    \toprule
    Dataset & ETTh1 & ETTh2 & ETTm1 & ETTm2 & Electricity & Traffic & CA \\
    \midrule
    Length    & 17,420 & 17,420 & 69,680 & 69,680  & 26,304 & 17,544 & 201,363  \\
    Features  & 7     & 7     & 7     & 7   & 321 & 862 & 8,600\\ 
    \bottomrule
    \end{tabular}
    }
   \caption{Statistics of the datasets used for evaluation.}\label{table:stat}
   \renewcommand{\arraystretch}{1.0}
\end{table}
\subsection{Backbones} \label{app_backbones}
We provide a detailed description of the backbone models used in our experiments:
\begin{itemize}[leftmargin=15pt]
    \item \textbf{DLinear} \cite{zeng2023transformers} is a simple yet effective MLP-based model for time series forecasting, which decomposes the time series into seasonality and trend and performs linear regression on each component.
    \item \textbf{iTransformer} \cite{itransformer} inverts the Transformer architecture by applying self-attention to variate tokens and feed-forward networks to temporal sequences.
    \item \textbf{xPatch} \cite{xpatch} is a dual-stream time series forecasting model that integrates CNN and MLP architectures with Exponential Moving Average (EMA) decomposition, efficiently capturing trend and seasonal components.
    \item \textbf{Moirai} \cite{moirai} is a foundation model for TSF, built on a Transformer architecture and pre-trained on the large-scale dataset to perform zero-shot predictions across diverse data with varying frequencies, dimensions, and distributions.
\end{itemize}

\begin{algorithm}[t]
\caption{HDT}\label{FD}
\raggedright
\textbf{Input:} Training dataset $\mathcal{X}$, hyper-parameters $k$ and $\lambda$, learning rates $\eta$\\
\textbf{Output:} Distilled dataset $\mathcal{S}$ 
\begin{algorithmic}[1]
\State Initialize $\mathcal{S}$.
\State $\mathcal{F_S} = \texttt{FFT}(\mathcal{S})$
\While{not converged}
    \State Initialize the model parameter $\theta.$
    \State Sample a subsequence $\mathcal{X}_\text{sub}$ from $\mathcal{X}$.
    \State $\mathcal{F_X} = \texttt{FFT}(\mathcal{X}_\text{sub})$
    \State Obtain $ \tilde{\mathcal{F_X}},   \tilde{\mathcal{F_S}}$. \Comment{Eq~\ref{eq:fx}.}
    \State Calculate $\mathcal{L}_\text{harm}$. \Comment{Eq~\ref{eq:freq_loss}.}
    \State Reconstruct $\mathcal{X_H}$ and $\mathcal{S_H}$. \Comment{Eq~\ref{eq:recon}.}
    \State Calculate $\mathcal{L}_\text{grad}$. \Comment{Eq~\ref{grad_loss}.}
    \State Update $\mathcal{F_S} \leftarrow \mathcal{F_S} - \eta \nabla (\mathcal{L}_\text{grad} + \lambda \mathcal{L}_\text{harm})$. \Comment{Eq~\ref{eq:final_loss}.}
\EndWhile  \\
\Return $\texttt{iFFT}(\mathcal{F_S})$
\end{algorithmic}
\end{algorithm}

\subsection{Baselines} \label{app_baselines}
We compare our method against the following baseline methods:
\begin{itemize}[leftmargin=15pt]
 \item \textbf{Random} is the most straightforward method, which selects a random subsequence.
 \item \textbf{DC} \cite{zhao2021datasetcondensation} matches the model gradient when training with $\mathcal{X}$ and $\mathcal{S}$ at every training step.
 \item \textbf{MTT} \cite{cazenavette2022dataset} matches the training trajectory when training with $\mathcal{X}$ and $\mathcal{S}$, which can be seen as a multistep version of DC.
 \item \textbf{TESLA} \cite{cui2023scaling} simplifies the computation of the trajectory matching loss of MTT and applies soft label assignment.
 \item \textbf{CondTSF} \cite{ding2024condtsf} is the first work to distill the TSF datasets and has achieved state-of-the-art performance by optimizing the value term with the output of expert models in a weighted average manner.
\end{itemize}

\subsection{Evaluation Protocol} \label{app_evalprotocol}
In all experiments, we maintained a consistent input window size ($l$) and output window size ($t$) of 96. We divided all datasets into training, validation, and test sets with a 60:20:20 ratio for ETT datasets and a 70:10:20 ratio for the Electricity, Traffic, and CA datasets. The training set was used for distillation, and we evaluated $\mathcal{S}$ with the validation set every 50 outer epochs for early stopping. Finally, we trained the target models (all backbone models) on the obtained distilled $\mathcal{S}$ and evaluated its performance on the test set. Each experiment was repeated three times, and we report the average Mean Squared Error (MSE), a standard performance metric for TSF.

\begin{table}[t]
    \centering
    \footnotesize
    \begin{tabular}{l c c c c c c}
        \toprule
        & & \multicolumn{5}{c}{\(\lambda\)} \\
        \cmidrule(lr){3-7}
        Dataset & {\(k\)} & {\(10^{-1}\)} & {\(10^{-2}\)} & {\(10^{-3}\)} & {\(10^{-4}\)} & {\(10^{-5}\)} \\
        \midrule
        \multirow{2}{*}{ETTh1} & 48 & 0.481 & 0.434 & 0.446 & 0.441 & 0.470 \\
                               & 96 & 0.471 & 0.432 & 0.446 & \textbf{0.430} & 0.466 \\
        \midrule
        \multirow{2}{*}{ETTm1} & 48 & 0.410 & 0.366 & \textbf{0.364} & 0.372 & 0.405 \\
                               & 96 & 0.425 & 0.378 & 0.369 & 0.380 & 0.397 \\
        \midrule
        \multirow{2}{*}{Electricity} & 48 & 0.256 & 0.209 & 0.212 & 0.215 & 0.235 \\
                                     & 96 & 0.250 & \textbf{0.208} & 0.210 & 0.213 & 0.241 \\
        \midrule
        \multirow{2}{*}{Traffic} & 48 & 0.754 & 0.684 & 0.690 & 0.697 & 0.735 \\
                                 & 96 & 0.745 & \textbf{0.679} & 0.692 & 0.703 & 0.725 \\

        \bottomrule
    \end{tabular}
    \caption{Hyperparameter analysis results (MSE) on various datasets with the DLinear backbone.}
    \label{tab:hyperparam}
\end{table}

\subsection{Implementation Details}\label{implementation}
For the baselines, we used the recommended hyperparameters settings in CondTSF. For HDT, we performed a grid search over the following hyperparameter values: $k=\{M//4, \: M//8\}$ and $\lambda=\{1e\text{-}1, 1e\text{-}2, 1e\text{-}3, 1e\text{-}4, 1e\text{-}5\}$, with the harmonic loss norm set to $p=1$. Our method operates in a channel-independent manner, applying the same distillation process to each channel. The learning rate \(\eta\) for updating \(\mathcal{S}\) was set to 0.01.  The inner loop was run for 20 iterations, with a maximum of 1000 iterations for the outer loop.
HDT was implemented with PyTorch\footnote{https://pytorch.org/}, using the components from the official repositories of DLinear\footnote{https://github.com/vivva/DLinear}, iTransformer\footnote{https://github.com/thuml/iTransformer}, xPatch\footnote{https://github.com/stitsyuk/xPatch}, and CondTSF\footnote{https://github.com/RafaDD/CondTSF}. All training was conducted on a single NVIDIA A100 80GB GPU.

\input{app_table}

\section{Additional Experimental Results}\label{app_add_overall}

\subsection{Hyperparameter Analysis} \label{app_hanalysis}

We conducted a hyperparameter analysis to study the impact of the number of harmonics, $k$, and the loss coefficient, $\lambda$. As shown in Table~\ref{tab:hyperparam}, our analysis indicates that performance is robust to the choice of $\lambda$ within a reasonable range. The optimal number of harmonics, $k$, shows a slight dependency on the dataset. Based on these findings, we selected the hyperparameters that achieved the best performance across the evaluation models.

\subsection{Results for a Smaller Synthetic Dataset}\label{app_additional}
We provide additional experimental results for a smaller synthetic dataset size of $M=192$ in Table~\ref{app_table_performance}. Even at this higher compression ratio, HDT maintains a performance advantage over the baseline methods, further demonstrating the robustness of our approach.

\subsection{Visualization}\label{app_vis}
Figure 4-7 provides a qualitative visualization of the synthetic datasets generated by different methods and backbones on the ETT datasets. For a fair comparison, all methods started from the same initial point (`Real' in the figures). An interesting observation is that the structure of the data distilled by previous methods appears to be influenced by the backbone architecture used. This may suggest a degree of architectural overfitting, where the synthetic data is tailored to the model that created it. Such a dependency could be a drawback in real-world applications where a single distilled dataset is expected to be effective for a diverse range of models.

\clearpage 

\begin{figure*}[t]
  \centering
  \includegraphics[width=\textwidth]{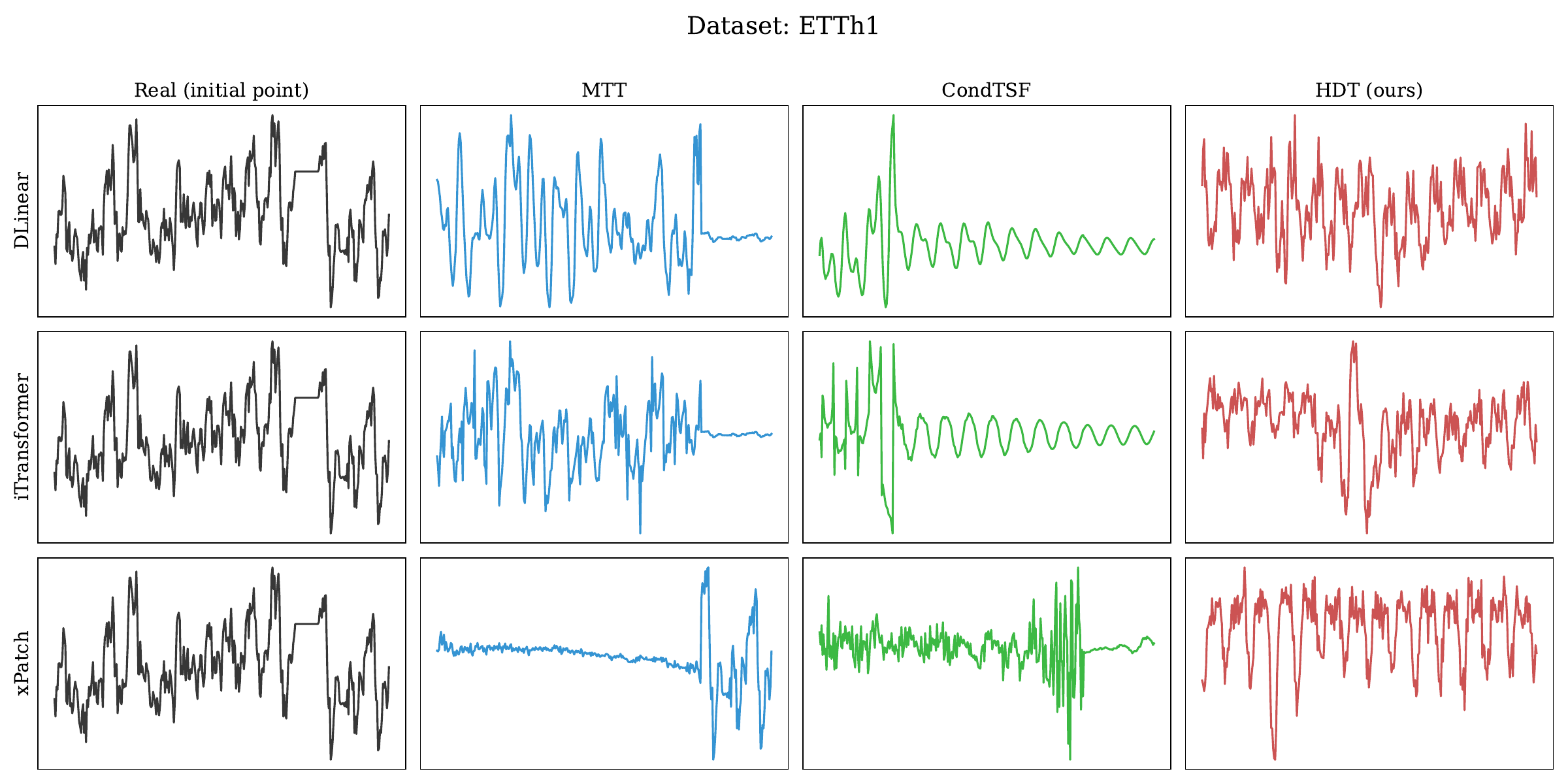}
  \caption{Visual comparison of distilled datasets generated by different methods and backbones on ETTh1. }
\end{figure*}
\begin{figure*}[t]
  \centering
  \includegraphics[width=\textwidth]{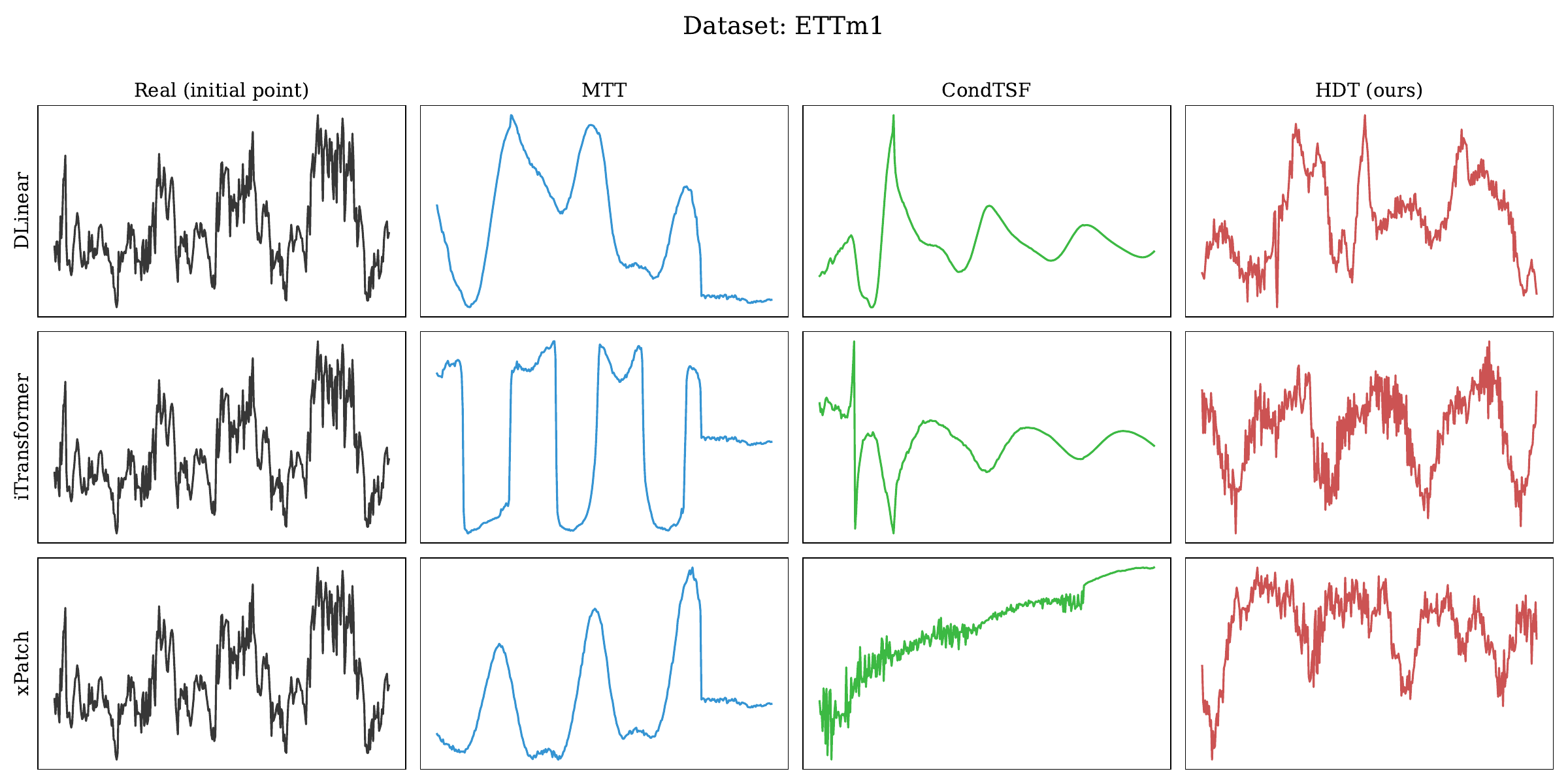}
  \caption{Visual comparison of distilled datasets generated by different methods and backbones on ETTm1. }
\end{figure*}\begin{figure*}[t]
  \centering
  \includegraphics[width=\textwidth]{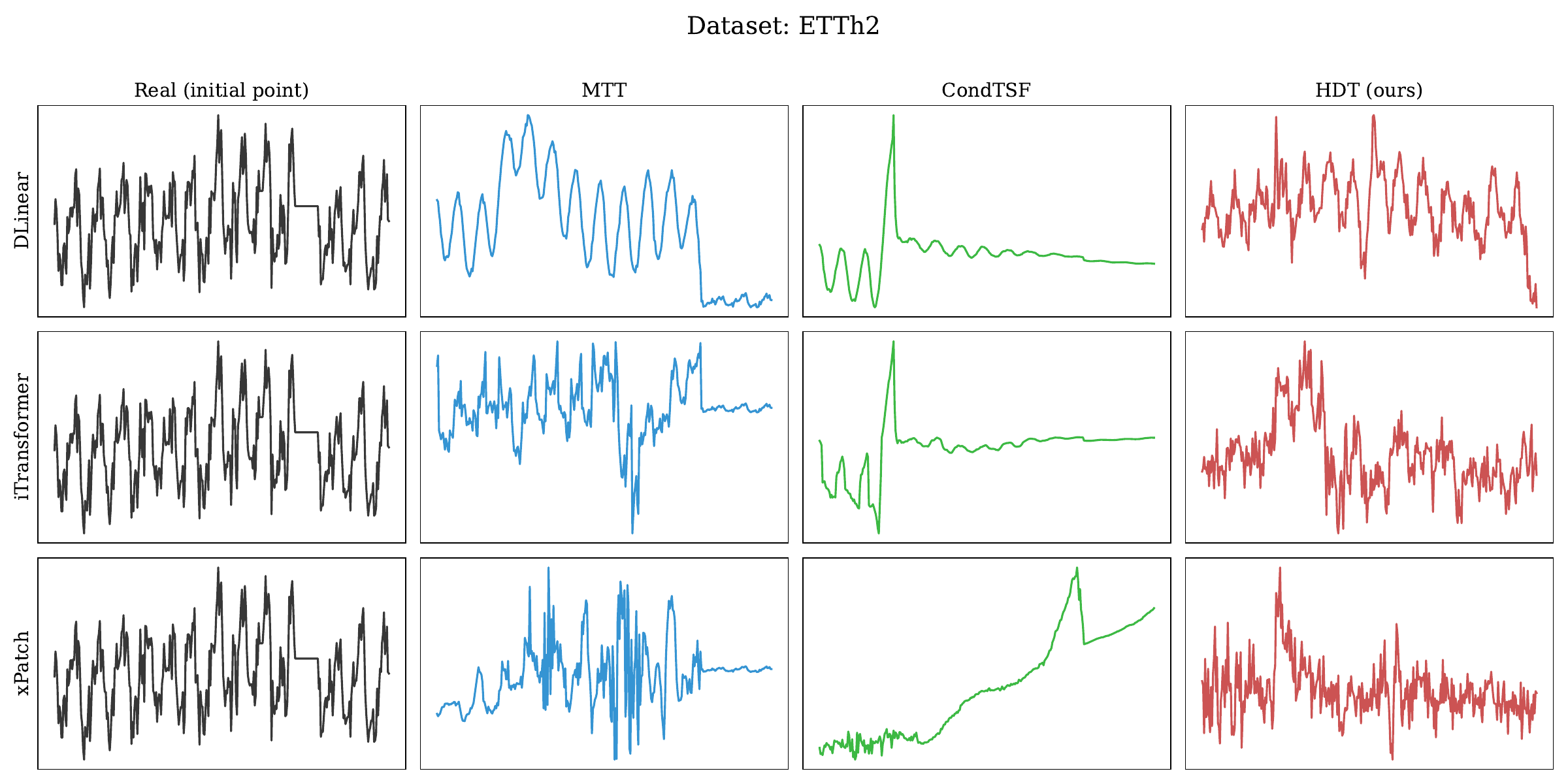}
  \caption{Visual comparison of distilled datasets generated by different methods and backbones on ETTh2. }
\end{figure*}
\begin{figure*}[t]
  \centering
  \includegraphics[width=\textwidth]{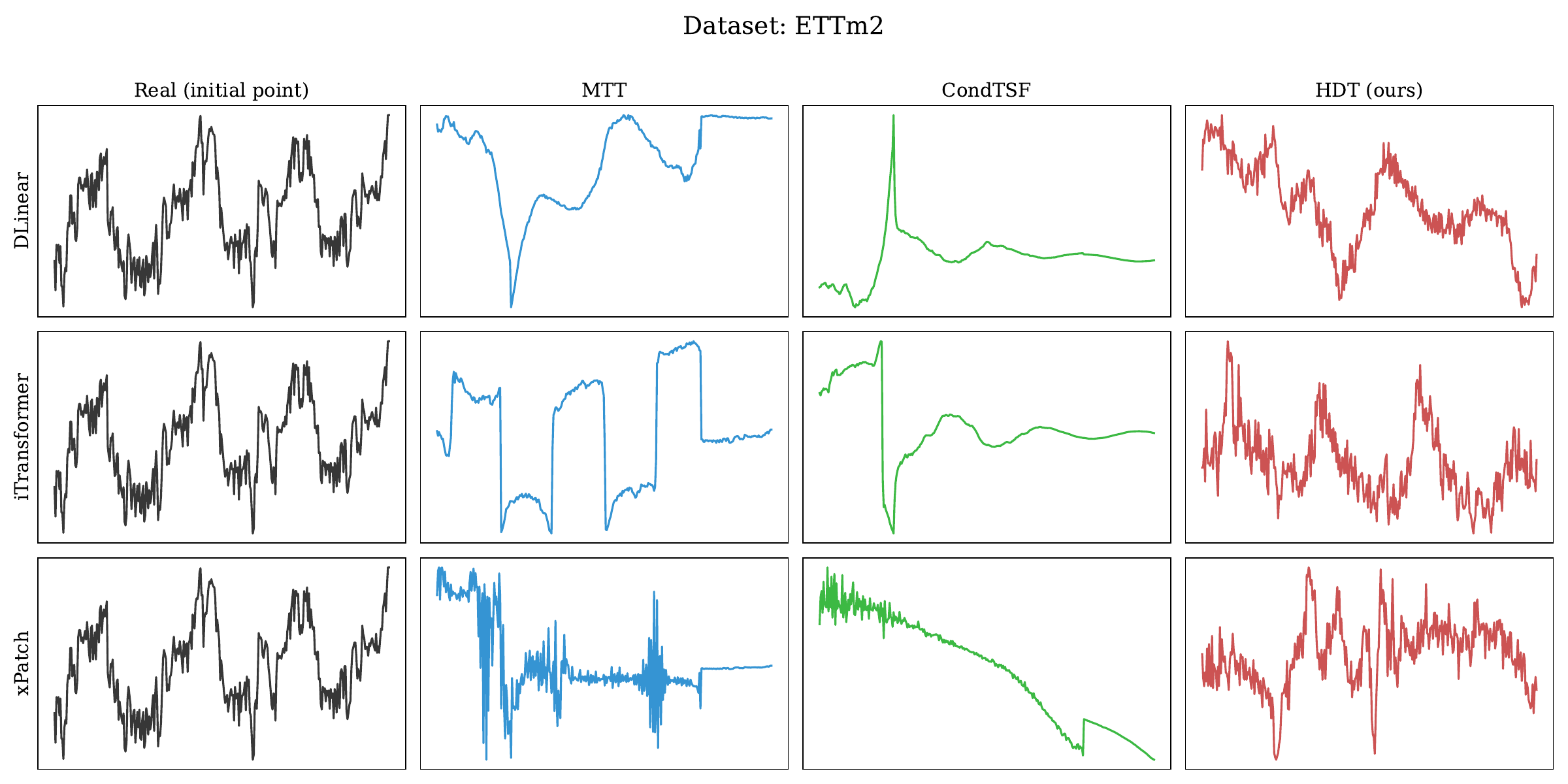}
  \caption{Visual comparison of distilled datasets generated by different methods and backbones on ETTm2. }
\end{figure*}

\end{document}

%% file: total_table.tex
\begin{table*}[t]
\centering

\renewcommand{\arraystretch}{1.0}

\vspace{0.6cm}

\vspace{0.1cm}
\resizebox{\textwidth}{!}{
\begin{tabular}{@{} l ccc ccc ccc ccc ccc ccc@{}}
\toprule
\multicolumn{19}{c}{\textbf{Backbone: DLinear}} \\
\midrule
\multicolumn{1}{r}{\textbf{Dataset}} & 
\multicolumn{3}{c}{ETTh1} & 
\multicolumn{3}{c}{ETTh2} & 
\multicolumn{3}{c}{ETTm1} & 
\multicolumn{3}{c}{ETTm2} & 
\multicolumn{3}{c}{Electricity} & 
\multicolumn{3}{c}{Traffic} \\
\cmidrule(lr){2-4} \cmidrule(lr){5-7} \cmidrule(lr){8-10} \cmidrule(lr){11-13} \cmidrule(lr){14-16} \cmidrule(lr){17-19}
\multicolumn{1}{r}{} & {L} & {T} & {C} 
 & {L} & {T} & {C} 
 & {L} & {T} & {C} 
 & {L} & {T} & {C} 
 & {L} & {T} & {C} 
 & {L} & {T} & {C} \\
\midrule
Random & 0.945 & 0.757 & 0.664 & 1.860 & 0.406 & 0.359 & 0.919 & 0.732 & 0.666 & 1.504 & 0.256 & 0.234 & 0.400 & 0.327 & 0.351 & 1.112 & 0.938 & 0.911 \\
\midrule
	DC&1.193&0.853&0.721&2.200&0.410&0.363&0.801&0.688&0.659&1.670&0.266&0.234&0.413&0.333&0.351&1.122&0.945&0.924 \\ 
	MTT&0.521&0.640&0.587&0.661&0.387&0.346&0.493&0.879&1.342&0.702&0.257&0.248&0.342&0.412&0.489&0.747&1.204&0.852 \\ 
	TESLA&0.576&0.535&0.530&0.724&0.381&0.348&0.443&0.475&0.438&0.428&0.231&0.220&0.388&0.413&0.394&1.087&0.956&0.953 \\ 
	CondTSF&\underline{0.510}&\underline{0.494}&\underline{0.492}&\underline{0.392}&\underline{0.336}&\underline{0.325}&\underline{0.410}&\underline{0.422}&\underline{0.416}&\underline{0.223}&\underline{0.209}&\underline{0.204}&\underline{0.231}&\underline{0.241}&\underline{0.238}&\underline{0.716}&\underline{0.854}&\underline{0.807} \\ 
	HDT (ours)&\textbf{0.430}&\textbf{0.421}&\textbf{0.409}&\textbf{0.359}&\textbf{0.331}&\textbf{0.311}&\textbf{0.364}&\textbf{0.405}&\textbf{0.389}&\textbf{0.211}&\textbf{0.205}&\textbf{0.201}&\textbf{0.208}&\textbf{0.239}&\textbf{0.232}&\textbf{0.679}&\textbf{0.847}&\textbf{0.754} \\ 
\midrule
Full Data & 0.386 & 0.389 & 0.384 & 0.326 & 0.314 & 0.296 & 0.343 & 0.345 & 0.334 & 0.186 & 0.185 & 0.177 & 0.195 & 0.152 & 0.175 & 0.648 & 0.408 & 0.470 \\
\bottomrule
\end{tabular}
}
\resizebox{\textwidth}{!}{
\begin{tabular}{@{} l ccc ccc ccc ccc ccc ccc@{}}
\toprule
\multicolumn{19}{c}{\textbf{Backbone: iTransformer}} \\
\midrule
\multicolumn{1}{r}{\textbf{Dataset}} & 
\multicolumn{3}{c}{ETTh1} & 
\multicolumn{3}{c}{ETTh2} & 
\multicolumn{3}{c}{ETTm1} & 
\multicolumn{3}{c}{ETTm2} & 
\multicolumn{3}{c}{Electricity} & 
\multicolumn{3}{c}{Traffic} \\
\cmidrule(lr){2-4} \cmidrule(lr){5-7} \cmidrule(lr){8-10} \cmidrule(lr){11-13} \cmidrule(lr){14-16} \cmidrule(lr){17-19}
\multicolumn{1}{r}{} & {L} & {T} & {C} 
 & {L} & {T} & {C} 
 & {L} & {T} & {C} 
 & {L} & {T} & {C} 
 & {L} & {T} & {C} 
 & {L} & {T} & {C} \\
\midrule
Random & 0.945 & 0.757 & 0.664 & 1.860 & 0.406 & 0.359 & 0.919 & 0.732 & 0.666 & 1.504 & 0.256 & 0.234 & 0.400 & 0.327 & 0.351 & 1.112 & 0.938 & 0.911 \\
\midrule
	DC&1.228&0.869&0.722&3.200&0.403&0.361&0.871&0.675&0.663&2.472&0.255&0.232&0.462&0.337&0.363&1.000&0.882&0.874 \\ 
	MTT&1.340&\underline{0.452}&\underline{0.441}&7.247&0.375&0.338&1.368&0.584&0.567&4.475&0.251&0.228&0.535&0.316&0.324&\underline{0.881}&0.839&0.943 \\ 
	TESLA&1.079&0.480&0.503&4.914&\underline{0.327}&\underline{0.323}&1.299&0.452&0.428&5.028&0.204&0.205&0.422&0.305&0.351&1.027&0.875&0.861 \\ 
	CondTSF&\underline{0.519}&0.496&0.492&\underline{0.442}&0.334&0.326&\underline{0.411}&\underline{0.411}&\underline{0.407}&\underline{0.236}&\underline{0.200}&\textbf{0.199}&\underline{0.231}&\underline{0.235}&\underline{0.234}&0.965&\underline{0.824}&\underline{0.822} \\ 
	HDT (ours)&\textbf{0.409}&\textbf{0.399}&\textbf{0.397}&\textbf{0.348}&\textbf{0.323}&\textbf{0.320}&\textbf{0.377}&\textbf{0.406}&\textbf{0.399}&\textbf{0.211}&\textbf{0.199}&\underline{0.200}&\textbf{0.229}&\textbf{0.231}&\textbf{0.230}&\textbf{0.755}&\textbf{0.793}&\textbf{0.734} \\ 
\midrule
Full Data & 0.386 & 0.389 & 0.384 & 0.326 & 0.314 & 0.296 & 0.343 & 0.345 & 0.334 & 0.186 & 0.185 & 0.177 & 0.195 & 0.152 & 0.175 & 0.648 & 0.408 & 0.470 \\
\bottomrule
\end{tabular}
}
\resizebox{\textwidth}{!}{
\begin{tabular}{@{} l ccc ccc ccc ccc ccc ccc@{}}
\toprule
\multicolumn{19}{c}{\textbf{Backbone: xPatch}} \\
\midrule
\multicolumn{1}{r}{\textbf{Dataset}} & 
\multicolumn{3}{c}{ETTh1} & 
\multicolumn{3}{c}{ETTh2} & 
\multicolumn{3}{c}{ETTm1} & 
\multicolumn{3}{c}{ETTm2} & 
\multicolumn{3}{c}{Electricity} & 
\multicolumn{3}{c}{Traffic} \\
\cmidrule(lr){2-4} \cmidrule(lr){5-7} \cmidrule(lr){8-10} \cmidrule(lr){11-13} \cmidrule(lr){14-16} \cmidrule(lr){17-19}
\multicolumn{1}{r}{} & {L} & {T} & {C} 
 & {L} & {T} & {C} 
 & {L} & {T} & {C} 
 & {L} & {T} & {C} 
 & {L} & {T} & {C} 
 & {L} & {T} & {C} \\
\midrule
Random & 0.945 & 0.757 & 0.664 & 1.860 & 0.406 & 0.359 & 0.919 & 0.732 & 0.666 & 1.504 & 0.256 & 0.234 & 0.400 & 0.327 & 0.351 & 1.112 & 0.938 & 0.911 \\
\midrule
	DC&1.290&0.829&0.712&3.928&0.408&0.363&0.751&0.657&0.654&2.477&0.250&0.230&0.372&0.322&0.343&1.048&0.882&0.868 \\ 
	MTT&1.740&\underline{0.496}&\underline{0.486}&6.968&\underline{0.359}&\underline{0.329}&0.904&0.589&0.538&5.021&0.240&0.229&0.420&0.334&0.348&1.175&1.025&0.828 \\ 
	TESLA&\underline{1.112}&0.560&0.544&3.850&0.397&0.362&1.161&0.666&\underline{0.421}&3.439&\underline{0.207}&\underline{0.203}&0.508&0.334&0.372&1.003&0.844&0.860 \\ 
	CondTSF&1.172&0.629&0.587&\underline{2.897}&0.406&0.362&\underline{0.574}&\underline{0.550}&0.531&\underline{0.349}&0.248&0.230&\underline{0.237}&\underline{0.261}&\underline{0.246}&\underline{0.904}&\underline{0.827}&\underline{0.808} \\ 
	HDT (ours)&\textbf{0.412}&\textbf{0.416}&\textbf{0.394}&\textbf{0.350}&\textbf{0.349}&\textbf{0.319}&\textbf{0.370}&\textbf{0.523}&\textbf{0.396}&\textbf{0.251}&\textbf{0.204}&\textbf{0.198}&\textbf{0.237}&\textbf{0.253}&\textbf{0.240}&\textbf{0.716}&\textbf{0.787}&\textbf{0.739} \\ 
\midrule
Full Data & 0.386 & 0.389 & 0.384 & 0.326 & 0.314 & 0.296 & 0.343 & 0.345 & 0.334 & 0.186 & 0.185 & 0.177 & 0.195 & 0.152 & 0.175 & 0.648 & 0.408 & 0.470 \\
\bottomrule
\end{tabular}
}
\caption{Overall dataset distillation performance in terms of MSE. The synthetic data size $M$ is set to $384$. \textbf{Bold} indicates the best result, while \underline{underlined} denotes the second-best result.}

\renewcommand{\arraystretch}{1.0}
\label{table:performance}
\end{table*}

%% file: app_table.tex
\begin{table*}[t!]
\centering
\renewcommand{\arraystretch}{1.1}

\vspace{0.1cm}
\resizebox{\textwidth}{!}{
\begin{tabular}{@{} l ccc ccc ccc ccc ccc ccc@{}}
\toprule
\multicolumn{19}{c}{\textbf{Backbone: DLinear}} \\
\midrule
\multicolumn{1}{r}{\textbf{Dataset}} & 
\multicolumn{3}{c}{ETTh1} & 
\multicolumn{3}{c}{ETTh2} & 
\multicolumn{3}{c}{ETTm1} & 
\multicolumn{3}{c}{ETTm2} & 
\multicolumn{3}{c}{Electricity} & 
\multicolumn{3}{c}{Traffic} \\
\cmidrule(lr){2-4} \cmidrule(lr){5-7} \cmidrule(lr){8-10} \cmidrule(lr){11-13} \cmidrule(lr){14-16} \cmidrule(lr){17-19}
\multicolumn{1}{r}{} & {L} & {T} & {C} 
 & {L} & {T} & {C} 
 & {L} & {T} & {C} 
 & {L} & {T} & {C} 
 & {L} & {T} & {C} 
 & {L} & {T} & {C} \\
\midrule
Random & 1.240 & 0.875 & 0.739 & 2.794 & 0.412 & 0.366 & 1.171 & 0.863 & 0.711 & 2.891 & 0.264 & 0.236 & 1.169 & 0.835 & 0.801 & 1.826 & 1.624 & 1.412 \\
\midrule
	DC&1.221&0.896&0.740&3.083&0.414&\underline{0.366}&1.198&0.857&0.720&2.598&0.272&0.238&1.176&0.855&0.801&1.832&1.596&1.382 \\ 
	MTT&1.146&2.489&3.429&3.038&1.971&1.841&1.098&2.582&3.844&2.919&0.333&0.572&0.973&0.933&0.923&1.488&1.479&1.460 \\ 
	TESLA&1.059&1.184&1.393&2.636&0.425&0.382&1.004&0.920&0.828&1.603&0.265&0.237&0.935&0.875&0.867&1.492&1.495&1.476 \\ 
	CondTSF&\underline{0.546}&\underline{0.581}&\underline{0.535}&\underline{0.433}&\underline{0.366}&0.368&\underline{0.436}&\underline{0.484}&\underline{0.476}&\underline{0.244}&\underline{0.222}&\underline{0.222}&\underline{0.228}&\underline{0.287}&\underline{0.246}&\underline{0.718}&\underline{0.863}&\underline{0.810} \\ 
	HDT (ours)&\textbf{0.476}&\textbf{0.481}&\textbf{0.448}&\textbf{0.426}&\textbf{0.335}&\textbf{0.321}&\textbf{0.391}&\textbf{0.410}&\textbf{0.387}&\textbf{0.230}&\textbf{0.210}&\textbf{0.202}&\textbf{0.221}&\textbf{0.271}&\textbf{0.231}&\textbf{0.694}&\textbf{0.845}&\textbf{0.745} \\ 
\midrule
Full Data & 0.386 & 0.389 & 0.384 & 0.326 & 0.314 & 0.296 & 0.343 & 0.345 & 0.334 & 0.186 & 0.185 & 0.177 & 0.195 & 0.152 & 0.175 & 0.648 & 0.408 & 0.470 \\
\bottomrule
\end{tabular}
}
\resizebox{\textwidth}{!}{
\begin{tabular}{@{} l ccc ccc ccc ccc ccc ccc@{}}
\toprule
\multicolumn{19}{c}{\textbf{Backbone: iTransformer}} \\
\midrule
\multicolumn{1}{r}{\textbf{Dataset}} & 
\multicolumn{3}{c}{ETTh1} & 
\multicolumn{3}{c}{ETTh2} & 
\multicolumn{3}{c}{ETTm1} & 
\multicolumn{3}{c}{ETTm2} & 
\multicolumn{3}{c}{Electricity} & 
\multicolumn{3}{c}{Traffic} \\
\cmidrule(lr){2-4} \cmidrule(lr){5-7} \cmidrule(lr){8-10} \cmidrule(lr){11-13} \cmidrule(lr){14-16} \cmidrule(lr){17-19}
\multicolumn{1}{r}{} & {L} & {T} & {C} 
 & {L} & {T} & {C} 
 & {L} & {T} & {C} 
 & {L} & {T} & {C} 
 & {L} & {T} & {C} 
 & {L} & {T} & {C} \\
\midrule
Random & 1.240 & 0.875 & 0.739 & 2.794 & 0.412 & 0.366 & 1.171 & 0.863 & 0.711 & 2.891 & 0.264 & 0.236 & 1.169 & 0.835 & 0.801 & 1.826 & 1.624 & 1.412 \\
\midrule
	DC&1.297&0.865&0.729&3.802&0.409&0.364&1.190&0.807&0.696&\underline{1.570}&0.267&0.237&1.197&0.856&0.778&1.796&1.629&1.383 \\ 
	MTT&1.103&0.779&0.732&3.212&0.361&0.341&1.344&0.730&0.720&3.849&0.229&0.235&1.038&0.803&0.778&1.514&1.432&1.392 \\ 
	TESLA&1.102&0.764&0.710&3.536&0.363&0.346&1.097&0.712&0.693&2.432&0.230&0.223&0.995&0.784&0.768&1.530&1.401&1.343 \\ 
	CondTSF&\underline{0.876}&\underline{0.542}&\underline{0.498}&\underline{3.021}&\underline{0.353}&\underline{0.333}&\underline{0.921}&\underline{0.446}&\underline{0.431}&3.654&\underline{0.211}&\underline{0.204}&\underline{0.364}&\underline{0.260}&\underline{0.277}&\underline{0.782}&\underline{0.835}&\textbf{0.731} \\ 
	HDT (ours)&\textbf{0.524}&\textbf{0.509}&\textbf{0.467}&\textbf{0.402}&\textbf{0.341}&\textbf{0.322}&\textbf{0.565}&\textbf{0.420}&\textbf{0.405}&\textbf{0.261}&\textbf{0.206}&\textbf{0.199}&\textbf{0.242}&\textbf{0.253}&\textbf{0.250}&\textbf{0.767}&\textbf{0.825}&\underline{0.738} \\ 
\midrule
Full Data & 0.386 & 0.389 & 0.384 & 0.326 & 0.314 & 0.296 & 0.343 & 0.345 & 0.334 & 0.186 & 0.185 & 0.177 & 0.195 & 0.152 & 0.175 & 0.648 & 0.408 & 0.470 \\
\bottomrule
\end{tabular}
}
\resizebox{\textwidth}{!}{
\begin{tabular}{@{} l ccc ccc ccc ccc ccc ccc@{}}
\toprule
\multicolumn{19}{c}{\textbf{Backbone: xPatch}} \\
\midrule
\multicolumn{1}{r}{\textbf{Dataset}} & 
\multicolumn{3}{c}{ETTh1} & 
\multicolumn{3}{c}{ETTh2} & 
\multicolumn{3}{c}{ETTm1} & 
\multicolumn{3}{c}{ETTm2} & 
\multicolumn{3}{c}{Electricity} & 
\multicolumn{3}{c}{Traffic} \\
\cmidrule(lr){2-4} \cmidrule(lr){5-7} \cmidrule(lr){8-10} \cmidrule(lr){11-13} \cmidrule(lr){14-16} \cmidrule(lr){17-19}
\multicolumn{1}{r}{} & {L} & {T} & {C} 
 & {L} & {T} & {C} 
 & {L} & {T} & {C} 
 & {L} & {T} & {C} 
 & {L} & {T} & {C} 
 & {L} & {T} & {C} \\
\midrule
Random & 1.240 & 0.875 & 0.739 & 2.794 & 0.412 & 0.366 & 1.171 & 0.863 & 0.711 & 2.891 & 0.264 & 0.236 & 1.169 & 0.835 & 0.801 & 1.826 & 1.624 & 1.412 \\
\midrule
	DC&1.342&0.907&0.729&3.135&0.469&0.352&1.342&0.841&0.707&3.239&0.238&0.227&1.180&0.840&0.806&1.829&1.528&1.396 \\ 
	MTT&1.313&0.843&0.770&3.514&\underline{0.357}&\underline{0.332}&1.195&0.755&0.710&3.580&0.231&0.220&1.082&0.826&0.768&1.694&1.543&1.374 \\ 
	TESLA&1.339&0.813&0.727&\underline{1.725}&0.394&0.362&1.141&0.726&0.694&3.365&0.234&0.216&1.052&0.810&0.764&1.719&1.576&1.378 \\ 
	CondTSF&\underline{0.674}&\underline{0.604}&\underline{0.548}&4.138&0.361&0.336&\underline{0.916}&\underline{0.482}&\underline{0.470}&\underline{1.878}&\underline{0.215}&\underline{0.207}&\underline{0.427}&\underline{0.321}&\underline{0.283}&\underline{1.081}&\underline{1.007}&\underline{0.920} \\ 
	HDT (ours)&\textbf{0.504}&\textbf{0.497}&\textbf{0.451}&\textbf{0.357}&\textbf{0.341}&\textbf{0.319}&\textbf{0.497}&\textbf{0.430}&\textbf{0.411}&\textbf{0.339}&\textbf{0.208}&\textbf{0.196}&\textbf{0.284}&\textbf{0.317}&\textbf{0.279}&\textbf{0.764}&\textbf{0.797}&\textbf{0.717} \\ 
\midrule
Full Data & 0.386 & 0.389 & 0.384 & 0.326 & 0.314 & 0.296 & 0.343 & 0.345 & 0.334 & 0.186 & 0.185 & 0.177 & 0.195 & 0.152 & 0.175 & 0.648 & 0.408 & 0.470 \\
\bottomrule
\end{tabular}
}

\caption{Overall distillation performance on the synthetic data size $M=192$ in terms of MSE.  Bold indicates the best result, while underlined denotes the second-best result.}

\renewcommand{\arraystretch}{1.0}
\label{app_table_performance}
\end{table*}

%% file: aaai2026.bib
@inproceedings{timemixer,
title={TimeMixer: Decomposable Multiscale Mixing for Time Series Forecasting},
author={Shiyu Wang and Haixu Wu and Xiaoming Shi and Tengge Hu and Huakun Luo and Lintao Ma and James Y. Zhang and JUN ZHOU},
booktitle={The Twelfth International Conference on Learning Representations},
year={2024},
url={https://openreview.net/forum?id=7oLshfEIC2}
}

@inproceedings{Fred,
  title={Frequency Domain-based Dataset Distillation}, 
    author={Donghyeok Shin and Seungjae Shin and Il-Chul Moon},
  booktitle={Proceedings of the Advances in Neural Information Processing Systems},
  year={2023}
}

@inproceedings{
xu2024fits,
title={{FITS}: Modeling Time Series with 10k Parameters},
author={Zhijian Xu and Ailing Zeng and Qiang Xu},
booktitle={The Twelfth International Conference on Learning Representations},
year={2024},
url={https://openreview.net/forum?id=bWcnvZ3qMb}
}

@misc{cisco,
  author       = {Cisco},
  title        = {Cisco for Oil and Gas},
  howpublished = {\url{https://www.cisco.com/site/us/en/solutions/industries/energy/oil-gas/index.html}},
  note         = {Accessed: July 21, 2025},
year = 2025
}

@misc{noaa,
  author       = {NOAA},
  title        = {Open Data Dissemination (NODD)},
  howpublished = {https://www.noaa.gov/information-technology/open-data-dissemination},
  note         = {Accessed: July 21, 2025},
year = 2025
}

@article{vitaldb,
  author       = {Lee, Hyeongki and Jung, Kyeongman and Kang, Soojeong and Yoo, Sungsik and Park, Rok and Park, Woojin and Lee, Hyung-Chul and Jheon, Sanghoon and Kim, Younsuck and Lee, Hyeonjeong and others},
  title        = {VitalDB, a high-fidelity multi-parameter vital signs database in surgical patients},
  journal      = {Scientific Data},
  volume       = {9},
  number       = {1},
  pages        = {1--11},
  year         = {2022},
  publisher    = {Nature Publishing Group},
  doi          = {10.1038/s41597-022-01411-5}
}

@misc{survey,
      title={A Survey of Deep Learning and Foundation Models for Time Series Forecasting}, 
      author={John A. Miller and Mohammed Aldosari and Farah Saeed and Nasid Habib Barna and Subas Rana and I. Budak Arpinar and Ninghao Liu},
      year={2024},
      eprint={2401.13912},
      archivePrefix={arXiv},
      primaryClass={cs.LG},
      url={https://arxiv.org/abs/2401.13912}, 
}

@article{autoreg,
  title={Evolutionary spectra and non-stationary processes},
  author={Priestley, Maurice B},
  journal={Journal of the Royal Statistical Society: Series B (Methodological)},
  volume={27},
  number={2},
  pages={204--229},
  year={1965},
  publisher={Wiley Online Library}
}

@article{kullback,
  title={On the Kullback-Leibler information divergence of locally stationary processes},
  author={Dahlhaus, Rainer},
  journal={Stochastic processes and their applications},
  volume={62},
  number={1},
  pages={139--168},
  year={1996},
  publisher={Elsevier}
}

@inproceedings{largest,
 author = {Liu, Xu and Xia, Yutong and Liang, Yuxuan and Hu, Junfeng and Wang, Yiwei and BAI, LEI and Huang, Chao and Liu, Zhenguang and Hooi, Bryan and Zimmermann, Roger},
 booktitle = {Proceedings of the Advances in Neural Information Processing Systems},
 editor = {A. Oh and T. Naumann and A. Globerson and K. Saenko and M. Hardt and S. Levine},
 pages = {75354--75371},
 publisher = {Curran Associates, Inc.},
 title = {LargeST: A Benchmark Dataset for Large-Scale Traffic Forecasting},
 url = {https://proceedings.neurips.cc/paper_files/paper/2023/file/ee57cd73a76bd927ffca3dda1dc3b9d4-Paper-Datasets_and_Benchmarks.pdf},
 volume = {36},
 year = {2023}
}

@article{wiener,
  title={Generalized harmonic analysis},
  author={Wiener, Norbert},
  journal={Acta mathematica},
  volume={55},
  number={1},
  pages={117--258},
  year={1930},
  publisher={Springer}
}

@article{khintchine,
  title={Korrelationstheorie der station{\"a}ren stochastischen Prozesse},
  author={Khintchine, Alexander},
  journal={Mathematische Annalen},
  volume={109},
  number={1},
  pages={604--615},
  year={1934},
  publisher={Springer}
}

@misc{timesfm,
      title={A Decoder-only Foundation Model for Time-series Forecasting}, 
      author={Abhimanyu Das and Weihao Kong and Rajat Sen and Yichen Zhou},
      year={2024},
  booktitle={Proceedings of the International Conference on Machine Learning},
  year={2024}
}

@misc{moirai,
      title={Unified Training of Universal Time Series Forecasting Transformers}, 
      author={Gerald Woo and Chenghao Liu and Akshat Kumar and Caiming Xiong and Silvio Savarese and Doyen Sahoo},
      year={2024},
  booktitle={Proceedings of the International Conference on Machine Learning},
  year={2024}
}

@inproceedings{jin2022graph,
  title={Graph Condensation for Graph Neural Networks},
  author={Jin, Wei and Zhao, Lingxiao and Zhang, Shichang and Liu, Yozen and Tang, Jiliang and Shah, Neil},
  booktitle={Proceedings of the International Conference on Learning Representations},
  year={2022}
}

@inproceedings{liu2024gcsr,
  title={Graph Data Condensation via Self-Expressive Graph Structure Reconstruction},
  author={Liu, Zhanyu and Zeng, Chaolv and Zheng, Guanjie},
  booktitle={Proceedings of the ACM SIGKDD Conference on Knowledge Discovery and Data Mining},
  year={2024}
}

@inproceedings{maekawa2023text,
  title={Dataset Distillation with Attention Labels for Fine-Tuning BERT},
  author={Maekawa, Aru and Kobayashi, Naoki and Funakoshi, Kotaro and Okumura, Manabu},
  booktitle={Proceedings of the Annual Meeting of the Association for Computational Linguistics},
  year={2023}
}

@inproceedings{maekawa2024dilm,
  title={{DiLM}: Distilling Dataset into Language Model for Text-Level Dataset Distillation},
  author={Maekawa, Aru and Kosugi, Satoshi and Funakoshi, Kotaro and Okumura, Manabu},
  booktitle={Proceedings of the Annual Conference of the North American Chapter of the Association for Computational Linguistics},
  year={2024}
}

@article{ding2024condtsf,
  title={CondTSF: One-Line Plugin of Dataset Condensation for Time Series Forecasting},
  author={Ding, Jianrong and Liu, Zhanyu and Zheng, Guanjie and Jin, Haiming and Kong, Linghe},
  journal={Proceedings of the Advances in Neural Information Processing Systems},
  year={2024}
}

@article{liu2024dataset,
  title={Dataset Condensation for Time Series Classification via Dual Domain Matching},
  author={Liu, Zhanyu and Hao, Ke and Zheng, Guanjie and Yu, Yanwei},
  journal={Proceedings of the ACM SIGKDD Conference on Knowledge Discovery and Data Mining},
  year={2024}
}

@article{wang2018datasetdistillation,
  title={Dataset Distillation},
  author={Wang, Tongzhou and Zhu, Jun-Yan and Torralba, Antonio and Efros, Alexei A.},
  journal={arXiv Preprint arXiv:1811.10959},
  year={2018}
}

@inproceedings{zhao2021datasetcondensation,
  title={Dataset Condensation with Gradient Matching},
  author={Zhao, Bo and Bilen, Hakan},
  booktitle={Proceedings of the International Conference on Learning Representations},
  year={2021}
}

@inproceedings{nguyen2021kip,
  title={Dataset Meta-Learning from Kernel Ridge-Regression},
  author={Nguyen, Timothy and Chen, Zhourong and Lee, Jaehoon},
  booktitle={Proceedings of the International Conference on Learning Representations},
  year={2021}
}

@article{nguyen2021dataset,
  title={Dataset Distillation with Infinitely Wide Convolutional Networks},
  author={Nguyen, Timothy and Novak, Roman and Xiao, Lechao and Lee, Jaehoon},
  journal={Proceedings of the Advances in Neural Information Processing Systems},
  year={2021}
}

@inproceedings{zhou2022dataset,
  title={Dataset Distillation Using Neural Feature Regression},
  author={Zhou, Yongchao and Nezhadarya, Ehsan and Ba, Jimmy},
  booktitle={Proceedings of the Advances in Neural Information Processing Systems},
  year={2022}
}

@inproceedings{zhao2023distribution,
  title={Dataset Condensation with Distribution Matching},
  author={Zhao, Bo and Bilen, Hakan},
  booktitle={Proceedings of the IEEE/CVF Winter Conference on Applications of Computer Vision},
  year={2023}
}

@inproceedings{cazenavette2022dataset,
  title={Dataset Distillation by Matching Training Trajectories},
  author={Cazenavette, George and Wang, Tongzhou and Torralba, Antonio and Efros, Alexei A. and Zhu, Jun-Yan},
  booktitle={Proceedings of the IEEE/CVF Conference on Computer Vision and Pattern Recognition},
  year={2022}
}

@inproceedings{cui2023scaling,
  title={Scaling Up Dataset Distillation to ImageNet-1K with Constant Memory},
  author={Cui, Justin and Wang, Ruochen and Si, Si and Hsieh, Cho-Jui},
  booktitle={Proceedings of the International Conference on Machine Learning},
  year={2023}
}

@inproceedings{guo2024datm,
  title={Towards Lossless Dataset Distillation via Difficulty-Aligned Trajectory Matching},
  author={Guo, Ziyao and Wang, Kai and Cazenavette, George and Li, Hui and Zhang, Kaipeng and You, Yang},
  booktitle={Proceedings of the International Conference on Learning Representations},
  year={2024}
}

@inproceedings{zhang2024m3d,
  title={{M3D}: Dataset Condensation by Minimizing Maximum Mean Discrepancy},
  author={Zhang, Hansong and Li, Shikun and Wang, Pengju and Zeng, Dan Ge and Shiming},
  booktitle={Proceedings of the AAAI Conference on Artificial Intelligence},
  year={2024}
}

@inproceedings{cazenavette2023generalizing,
  title={Generalizing Dataset Distillation via Deep Generative Prior},
  author={Cazenavette, George and Wang, Tongzhou and Torralba, Antonio and Efros, Alexei A. and Zhu, Jun-Yan},
  booktitle={Proceedings of the IEEE/CVF Conference on Computer Vision and Pattern Recognition},
  year={2023}
}

@inproceedings{liu2023mgdd,
  title={{MGDD}: A Meta Generator for Fast Dataset Distillation},
  author={Liu, Songhua and Wang, Xinchao},
  booktitle={Proceedings of the Advances in Neural Information Processing Systems},
  year={2023}
}

@inproceedings{liu2023fewshot,
  title={Few-Shot Dataset Distillation via Translative Pre-Training},
  author={Liu, Songhua and Wang, Xinchao},
  booktitle={Proceedings of the IEEE/CVF International Conference on Computer Vision},
  year={2023}
}

@inproceedings{wei2023sparse,
  title={Sparse Parameterization for Epitomic Dataset Distillation},
  author={Wei, Xing and Cao, Anjia and Yang, Funing and Ma, Zhiheng},
  booktitle={Proceedings of the Advances in Neural Information Processing Systems},
  year={2023}
}

@misc{xpatch,
      title={xPatch: Dual-Stream Time Series Forecasting with Exponential Seasonal-Trend Decomposition}, 
      author={Artyom Stitsyuk and Jaesik Choi},
      year={2025},
      eprint={2412.17323},
      archivePrefix={arXiv},
      primaryClass={cs.LG},
      url={https://arxiv.org/abs/2412.17323}, 
}

@inproceedings{
itransformer,
title={iTransformer: Inverted Transformers Are Effective for Time Series Forecasting},
author={Yong Liu and Tengge Hu and Haoran Zhang and Haixu Wu and Shiyu Wang and Lintao Ma and Mingsheng Long},
booktitle={Proceedings of the International Conference on Learning Representations},
year={2024},
url={https://openreview.net/forum?id=JePfAI8fah}
}

@inproceedings{donghao2024moderntcn,
  title={{ModernTCN}: A Modern Pure Convolution Structure for General Time Series Analysis},
  author={Luo, Donghao and Xue, Wang},
  booktitle={Proceedings of the International Conference on Learning Representations},
  year={2024},
  url={https://openreview.net/forum?id=vpJMJerXHU}
}

@inproceedings{lstnet,
  author = {Lai, Guokun and Chang, Wei-Cheng and Yang, Yiming and Liu, Hanxiao},
  title = {Modeling Long- and Short-Term Temporal Patterns with Deep Neural Networks},
  year = {2018},
  isbn = {9781450356572},
  publisher = {Association for Computing Machinery},
  address = {New York, NY, USA},
  url = {https://doi.org/10.1145/3209978.3210006},
  doi = {10.1145/3209978.3210006},
  booktitle = {Proceedings of the International ACM SIGIR Conference on Research \& Development in Information Retrieval},
  location = {Ann Arbor, MI, USA},
  series = {SIGIR '18}
}

@inproceedings{zeng2023transformers,
  title={Are Transformers Effective for Time Series Forecasting?},
  author={Zeng, Ailing and Chen, Muxi and Zhang, Lei and Xu, Qiang},
  booktitle={Proceedings of the AAAI Conference on Artificial Intelligence},
  year={2023}
}

@inproceedings{ekambaram2023tsmixer,
  title={TSMixer: Lightweight MLP-Mixer Model for Multivariate Time Series Forecasting},
  author={Ekambaram, Vijay and Jati, Arindam and Nguyen, Nam and Sinthong, Phanwadee and Kalagnanam, Jayant},
  booktitle={Proceedings of the ACM SIGKDD Conference on Knowledge Discovery and Data Mining},
  year={2023}
}

@article{informer,
  title={Informer: Beyond Efficient Transformer for Long Sequence Time-Series Forecasting},
  url={https://ojs.aaai.org/index.php/AAAI/article/view/17325},
  DOI={10.1609/aaai.v35i12.17325},
  journal={Proceedings of the AAAI Conference on Artificial Intelligence},
  author={Zhou, Haoyi and Zhang, Shanghang and Peng, Jieqi and Zhang, Shuai and Li, Jianxin and Xiong, Hui and Zhang, Wancai},
  year={2021},
  month={May},
  pages={11106--11115}
}

@article{nie2023time,
  title={A Time Series Is Worth 64 Words: Long-Term Forecasting with Transformers},
  author={Nie, Yuqi and Nguyen, Nam H and Sinthong, Phanwadee and Kalagnanam, Jayant},
  journal={Proceedings of the International Conference on Learning Representations},
  year={2023}
}

@article{wu2022timesnet,
  title={TimesNet: Temporal 2D-Variation Modeling for General Time Series Analysis},
  author={Wu, Haixu and Hu, Tengge and Liu, Yong and Zhou, Hang and Wang, Jianmin and Long, Mingsheng},
  journal={arXiv Preprint arXiv:2210.02186},
  year={2022},
  publisher={arXivPreprint}
}
